\newtheorem{thm}{Theorem}[section]
\newtheorem{lemma}[thm]{Lemma}
\newtheorem{fig}{Figure}
\newenvironment{proof}{\begin{trivlist}\item[]{\em Proof.\/\ }}%
                      {\hfill$\Box$ \\ \end{trivlist}}
\title{On Ullman's theorem in computer vision}
\author{Oliver Knill and Jose Ramirez-Herran
\footnote{Harvard University, this research was supported by the Harvard Extension School}}
\date{August 17, 2007}
\begin{document}
\maketitle

\abstract{
Both in the plane and in space, we invert the nonlinear Ullman transformation
for 3 points and 3 orthographic cameras. While Ullman's theorem assures a unique reconstruction 
modulo a reflection for 3 cameras and 4 points, we find a locally unique reconstruction
for 3 cameras and 3 points. Explicit reconstruction formulas allow to 
decide whether picture data of three cameras seeing three points can be realized 
as a point-camera configuration.
}

\section{Introduction}

Ullman's theorem in {\bf computer vision} is a prototype of a {\bf structure from motion result}.
Given $m$ planes in space and $n$ points for which we know the orthogonal projections of the points on the planes,
we want to recover the planes and the points. 
The problem can also be formulated as follows: given a fixed orthographic camera, and a point configuration 
which undergoes a rigid transformation. Taking $m$ pictures of this {\bf rigid $n$-body motion}, 
how do we reconstruct the body as well as its motion? Ullman's theorem is often cited as follows:
"For rigid transformations, a unique metrical reconstruction is 
known to be possible from three orthographic views of four points" \cite{Ullman}. \\

While 3 points in general position can be reconstructed from 2 orthographic  
projections, if the image planes are known, one needs 3 views to recover also 
the camera parameters. While Ullman's theorem states {\bf four points}, 
{\bf three points are enough} for a {\bf locally unique} reconstruction. 
Actually, already Ullman's proof demonstrated this. We produce algebraic inversion formulas in this paper. 
Ullman's transformation is a nonlinear polynomial map which computer algebra systems is unable to invert. 
Ullman's proof idea is to reconstruct the intersection lines of the planes first, computer algebra
systems produce complicated solution formulas because quartic polynomial equations 
have to be solved. Fortunately, it is possible to reduce the complexity. \\

The fact that four points produce an overdetermined system has been described by Ullman as follows:
"the probability that three views of four points not moving rigidly together will admit a 
rigid interpretation is low. In fact, the probability is zero." (\cite{Ullman} Ullman section 4.4 p. 149.)
In other words, for a given 3 sets of 4 points in the plane, a reconstruction 
of a 3D scene with 4 points and 3 cameras is in general not possible. Indeed, the
camera-point space has a much lower dimension than the space of photo configurations. 
Assuming 3 cameras, it is for 3 points, that the number of unknowns matches the number of equations.
Let's look at a simple dimensional analysis for three points and three cameras.
See \cite{KnillRamirezInequality} for arbitrary cameras.
One point can be fixed at the orign and one camera can be placed as the $xy$-plane. 
Because a general rotation in space needs 3 parameters, there are 3 unknowns for each camera.
Because the projections onto the first camera already give the $x,y$ coordinates of the planes, 
we only need to know the heights of the points. This leaves us with $8$ unknowns: 
$C = (z_1,z_2,\theta_1,\phi_1,\gamma_1, \theta_2, \phi_2,\gamma_2)$. Every point-camera configuration 
produces 2 real coordinates $Q_j(P_i)$ so that two cameras provide us with a total $8$ of data points. Let
$R_j$ denote the rotation matrices defined by the angles $\theta_j,\phi_j,\gamma_j$, let
$p_j=R_j (1,0,0),q_j=R_j (0,1,0)$ be the basis in the camera planes and denote by $P_i=(x_i,y_i,z_i)$ the points.
The nonlinear {\bf structure from motion transformation} $F$ from $R^8$ to $R^8$ is
$$ F(C) = (p_2 \cdot P_2, q_2 \cdot P_2, p_2 \cdot P_3, q_2 \cdot P_3,p_3 \cdot P_2, q_3 \cdot P_2, p_3 \cdot P_3, q_3 \cdot P_3)  \; . $$
It needs to be inverted explicitely on the image $\{ F(C) \; | \; {\rm det}(F)(C) \neq 0 \; \}$.
Because $F$ is not surjective, it is an interesting question to characterize the image data which allow 
a reconstructions. By the implicity function theorem, the boundary of this set is 
$\{ F(C) \; | \; {\rm det}(F)(C)=0 \; \}$.

\begin{center}
\scalebox{0.40}{\includegraphics{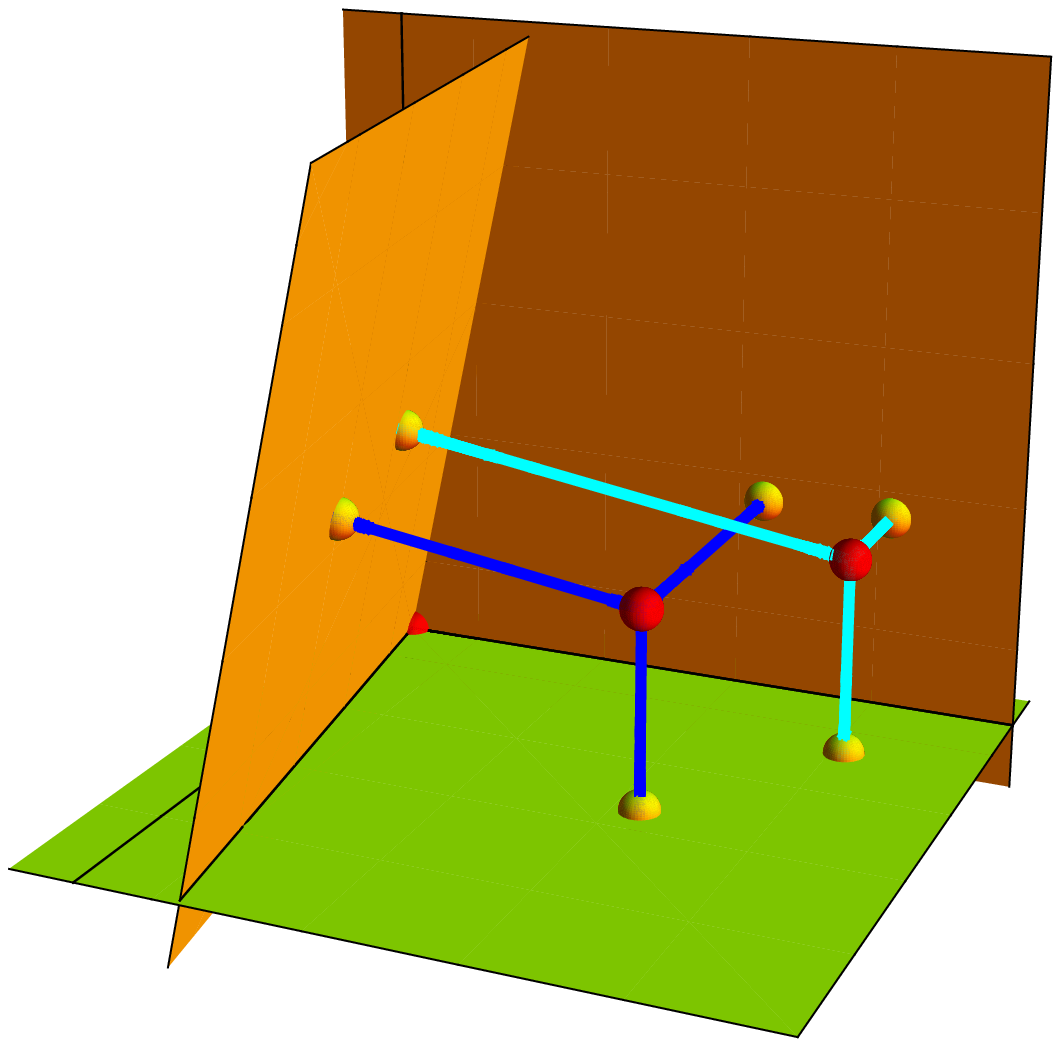}}
$\rightarrow$
\scalebox{0.80}{\includegraphics{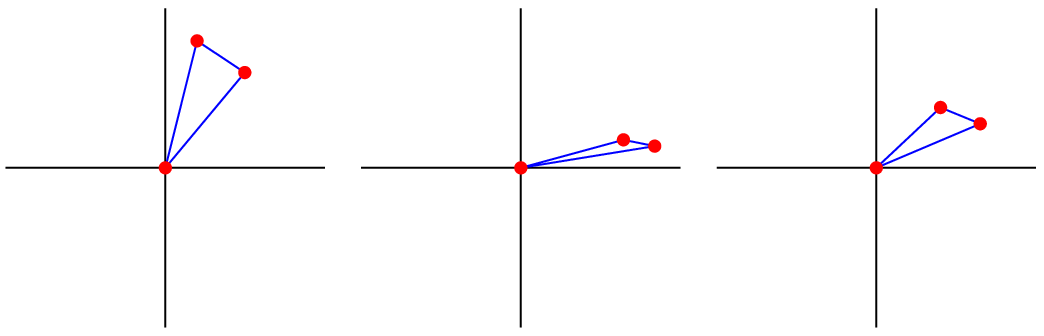}}
\end{center}
\begin{fig}
Given three planes and three points, the map $F$ produces projections of the points
on the planes. The problem is to rebuild the planes and the points from these
data. The nonlinear map $F$ from camera point configurations to the photographic data space 
is finite to one. The set for which $\{ {\rm det}(F)=0 \}$ is mapped to the boundary of the image 
which is a proper subset of all possible photographic data. 
\end{fig}

As Ullman has pointed out, the reconstruction is not unique: changing the
signs of $P_i,p_i$ and $q_i$ does not change the image point. 
This is the case for any number of points $P_i$ and any number of cameras spanned by 
a vector pair $(q_i,p_i)$ because the image data
$(P_i \cdot q_j,P_i \cdot p_j)$ from which we want to recover $P_i,p_j,q_j$ are the same if $P_i,p_j,q_j$
are replaced by $-P_i,-p_j,-q_j$. So, even with arbitraryly many cameras, structure is never uniquely 
recoverable. Less ambiguities occur with a four'th point $P_4$ 
as Ullman has shown. While adding this four'th point reduces the number of mirror 
ambiguities, it adds constraints so that almost all image data are unrealizable.
Indeed, a reconstruction is only possible on a codimension 3 manifold of photographic 
data situations.  \\

Historically, Ullman's theorem is a key result. It provides a link between 
computer vision, psychology, artificial intelligence and geometry. 
We should point out that while Ullman's setup is very familiar to {\bf computer aided design} CAD, 
where an engineer works with three views of the three-dimensional object, there is an essential 
difference with structure from motion: unlike in CAD, we do not know the cameras and 
finding them is part of the reconstruction problem. 


\section{Ullman's theorem in two dimensions}

The two-dimensional Ullman problem is interesting by itself. The algebra is simpler than in three
dimensions but it is still not completely trivial. 
The two dimensional situation plays an important role in the 
3 dimensional problem because the three dimensional situation reduces to it if the three planes 
have coplanar normal vectors. 
Let's first reformulate the two-dimensional Ullman theorem in a similar fashion as Ullman did.
A more detailed reformulation can be found at the end of this section.

\begin{thm}[Two dimensional Ullman theorem]
In the plane, three different orthographic affine camera images of three noncollinear points
determines both the points and cameras in general up to a reflections at the camera planes. 
\end{thm}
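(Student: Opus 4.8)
The plan is to recast the reconstruction as a \textbf{matrix factorization} problem and then perform a \emph{metric upgrade}. In the plane an orthographic affine camera sends a point $P\in\mathbb{R}^2$ to the scalar $q_j\cdot P+t_j$, where $q_j$ is a unit direction vector and $t_j$ a translation. First I would remove the translations by centering: subtracting, for each camera, the mean of its three image values replaces every point by its deviation from the centroid and kills $t_j$, so we may assume the three points have centroid at the origin and that camera $j$ is the pure projection $u_{ij}=q_j\cdot P_i$. Collecting the data into the $3\times3$ matrix $W$ with $W_{ji}=u_{ij}$, we obtain $W=QM$, where $Q$ is the $3\times2$ matrix whose rows are the unit directions $q_1,q_2,q_3$ and $M$ is the $2\times3$ matrix whose columns are the centered point coordinates. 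Noncollinearity of the three points makes the columns of $M$ span the plane, so $M$ has rank $2$, and for three distinct cameras in general position $W$ has rank exactly $2$.

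Next I would exploit the fact that any rank-$2$ factorization $W=\hat Q\hat M$ (obtained, say, from the singular value decomposition) differs from the true one by an invertible $2\times2$ matrix, namely $Q=\hat Q A^{-1}$ and $M=A\hat M$ for some $A\in GL(2)$. The geometric information that pins down $A$ is precisely that the rows of $Q$ are unit vectors. Writing $B=(A^{T}A)^{-1}$, a symmetric $2\times2$ matrix with three independent entries, and letting $\hat q_j$ denote the $j$-th row of $\hat Q$, the three constraints $\|q_j\|^2=1$ become the linear equations $\hat q_j\,B\,\hat q_j^{\,T}=1$ for $j=1,2,3$. This is the crux of the argument: three cameras furnish exactly three linear equations for the three unknown entries of $B$, which is why three views, and not two, suffice for a locally unique answer.

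I would then solve this $3\times3$ linear system explicitly by Cramer's rule, recovering $B$ and hence $B^{-1}=A^{T}A$. A positive-definite $2\times2$ matrix has a square root that is unique up to left multiplication by an orthogonal matrix, so $A$, and with it $Q=\hat Q A^{-1}$ and $M=A\hat M$, is determined only up to the replacement $A\mapsto OA$ with $O\in O(2)$. This residual $O(2)$ acts as a simultaneous isometry on both the cameras and the points: its rotation part is absorbed by the gauge fixing (the alignment of the first camera), while the determinant $-1$ component is exactly the reflection at the camera planes that the statement leaves undetermined. Composing these steps produces the closed-form inversion formulas that computer algebra cannot find directly.

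The main obstacle is controlling the two genericity conditions hidden in this argument. First, the $3\times3$ coefficient matrix of the system for $B$ must be nonsingular; its vanishing locus describes the degenerate camera configurations, such as coincident or antipodal projection directions, and should coincide with the set $\{\det(F)=0\}$ of the introduction. Second, even when the system is uniquely solvable, the solution $B$ must come out positive definite for a real square root $A$ to exist, and this positivity is precisely the \emph{realizability} condition that separates genuine photographic data from inadmissible data. Verifying that both conditions hold on an open dense set, and identifying their complement explicitly in terms of the image values, is the part of the proof that requires the careful algebra.
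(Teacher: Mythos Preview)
Your argument is correct and is a genuinely different route from the paper's. The paper fixes the gauge explicitly (first point at the origin, first camera the $x$-axis), writes the four residual equations $u\cos\alpha+p\sin\alpha=a$, etc., computes the Jacobian determinant $\sin\alpha\sin\beta\sin(\alpha-\beta)(x_1y_2-x_2y_1)$ to identify the degenerate locus, and then eliminates $p,q$ by hand to obtain a single quadratic in $\cos\alpha$ (and in $\cos\beta$), from which explicit rational formulas for $\cos\alpha,\cos\beta,p,q$ are read off. Your approach is the two-dimensional Tomasi--Kanade factorization: center the data, factor the rank-$2$ measurement matrix, and perform a metric upgrade by solving a $3\times3$ linear system for the symmetric $2\times2$ matrix $B=(A^{T}A)^{-1}$. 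What you gain is conceptual transparency (the ``three cameras for three unknowns in $B$'' makes the dimension count immediate) and effortless generalization to $m>3$ cameras or $n>3$ points, where the upgrade becomes an overdetermined least-squares problem. What the paper's direct elimination buys is fully explicit algebraic formulas in the raw image coordinates, without an intermediate SVD, and a concrete polynomial description of both the degeneracy locus $\det(DF)=0$ and the realizability boundary; your positivity condition on $B$ is equivalent to the paper's condition $|\cos\alpha|\le1$, but the paper can exhibit explicit non-realizable data points from its formulas. One small caveat: your claim that the method ``produces the closed-form inversion formulas that computer algebra cannot find directly'' is a bit optimistic, since the SVD step is itself not a closed-form operation in the image data; the paper's formulas, by contrast, are genuinely algebraic in $u,v,a,b,c,d$.
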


\begin{center}
\parbox{6.2cm}{\scalebox{0.50}{\includegraphics{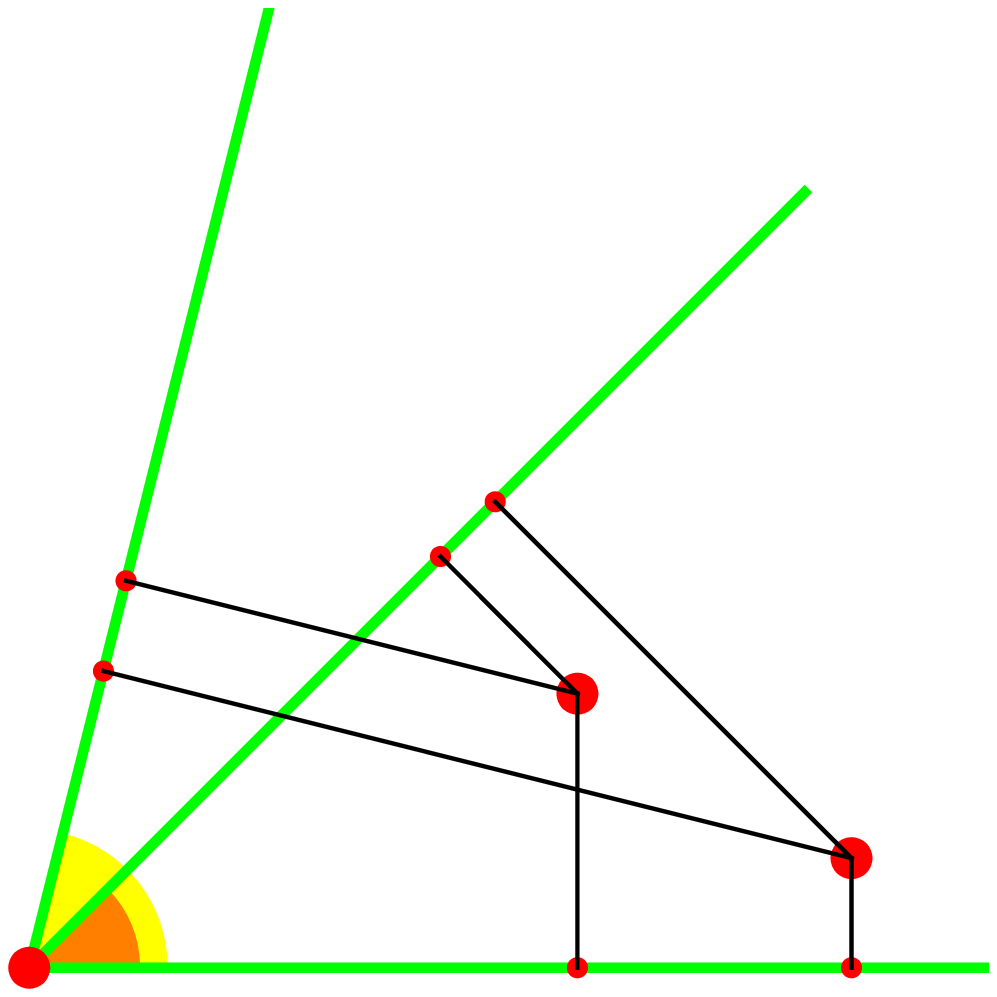}}}
\end{center}
\begin{fig}
The setup for the structure of motion problem with three orthographic cameras and three points
in two dimensions. One point is at the origin, one camera is the $x$-axis. The problem is to find
the $y$ coordinates of the two points as well as the two camera angles from the scalar 
projections onto the lines. 
\end{fig}

\begin{proof}
With the first point $P_1$ at the origin $(0,0)$, the translational symmetry of the problem is fixed.
Because cameras can be translated without changing the pictures, we can assume that all 
camera planes go through the origin $(0,0)$. By having the first camera as the $x$-axis,
the rotational symmetry of the problem is fixed. We are left with 6 unknowns, the
$y$-coordinates of the two points $(x_i,y_i)$ and the directions $v_i = (\cos(\alpha_i),\sin(\alpha_i))$
of the other cameras. Because the $x$-coordinates of the points can directly be seen 
by the first camera, only $4$ unknowns remain: $y_1,y_2,\alpha_1,\alpha_2$. 
We know the image data $a_{ij}$, the scalar component of the projection of the point onto the camera. 
This is what the photographer $j$ sees from the point $P_i$. We have the equations
$$ a_{ij} = P_i \cdot v_j $$
which are nonlinear in the angles $\alpha_i$ for given $x_1,x_2$. The problem is to invert 
the structure from motion map:
$$ F(\left[ \begin{array}{c} y_1 \\ y_2 \\ \alpha_1 \\ \alpha_2 \end{array} \right] )= 
     \left[ \begin{array}{c}      x_1 \cos(\alpha_1) + y_1 \sin(\alpha_1) \\
                                  x_1 \cos(\alpha_2) + y_1 \sin(\alpha_2) \\
                                  x_2 \cos(\alpha_1) + y_2 \sin(\alpha_1) \\
                                  x_2 \cos(\alpha_2) + y_2 \sin(\alpha_2) 
            \end{array} \right]  =
     \left[ \begin{array}{c} a_{11} \\ a_{12} \\ a_{21} \\ a_{22} \end{array} \right] $$
on $R^4$.  It has the Jacobian determinant 
$$ {\rm det}(DF) = \sin(\alpha_1) \sin(\alpha_2) \sin(\alpha_1-\alpha_2) (x_1 y_2-x_2 y_1) \; . $$
We see that it is locally invertible on the image if and only if the three cameras are all different and if the three 
points are not collinear. Because changing the signs of $P_i$ and $v_j$ does not alter the 
values $a_{ij} = P_i \cdot v_j$ seen by the photographer, we have a global reflection 
ambiguity in the problem. We will give explicit solution formulas below. 
\end{proof}

Can this result be improved? What is the image of the map $F$ and
how can one characterize triples of pictures for which a reconstruction is possible?  \\

To see whether the result is sharp, lets look at the dimensions. In dimension $d$, an affine 
orthographic camera is determined
by ${\rm dim}(SO_d) = d (d-1)/2$ parameters and a point by $d$ parameters. We gain $(d-1)$ coordinates for each 
point-camera pair. The global Euclidean symmetry of the problem - rotating and translating the 
point-camera configuration does not change the pictures - gives us the {\bf structure from motion 
inequality for orthographic cameras}
$$  n d + m [ d (d-1)/2 + (d-1) (d-2)/2] \leq (d-1) n m + d + d (d-1)/2 $$
which for $m=3$ and $d=2$ reduces to
$2 n + 6 \leq 3 n  + 2 + 1$ showing that $n=3$ is sharp. 
See \cite{KnillRamirezInequality} for more details. 

\begin{center}
\parbox{7.5cm}{\scalebox{0.45}{\includegraphics{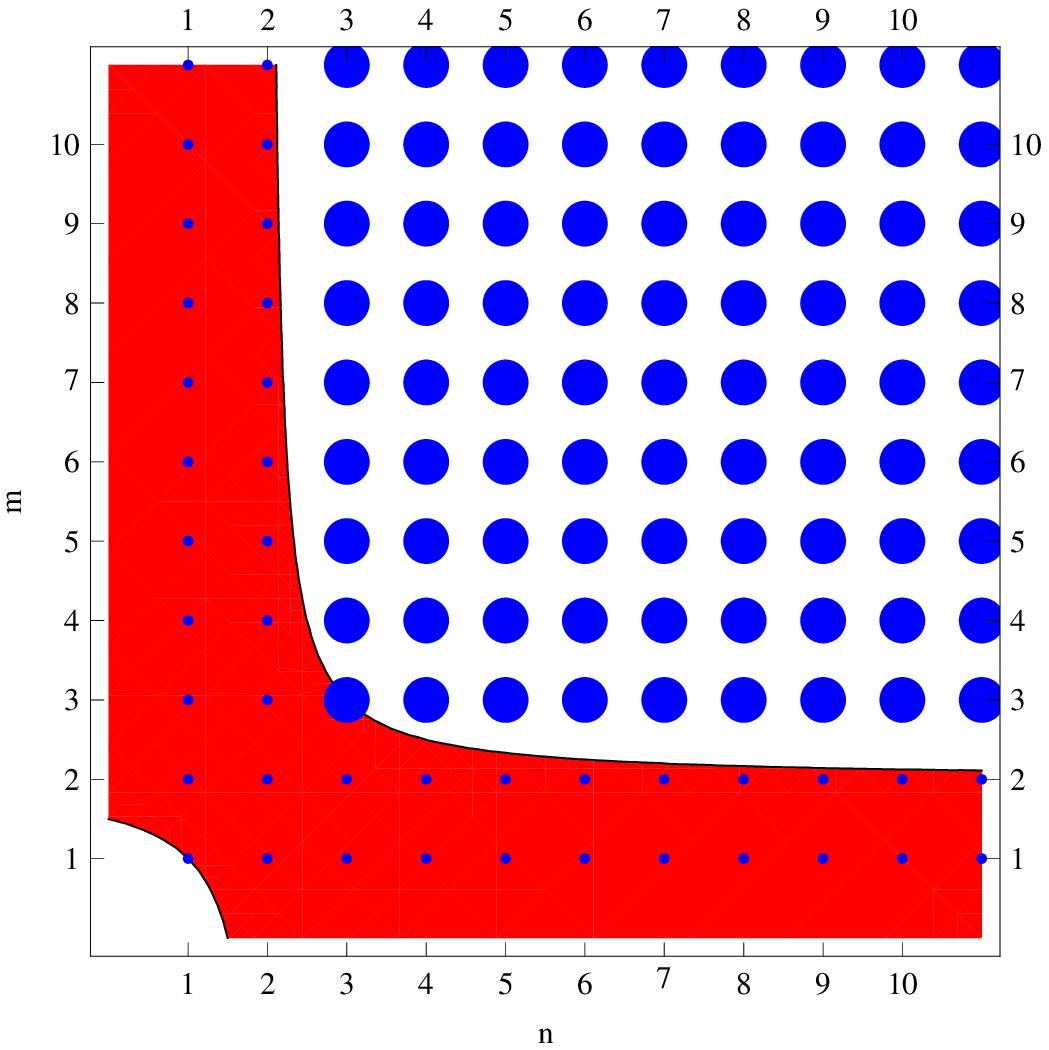}}}
\end{center}
\begin{fig}
The set of dimension pairs $(n,m)$ for which a reconstruction is not possible
by the structure from motion inequality. We see that the point $(n,m) = (3,3)$ is 
the only point, where we have equality. 
\end{fig}

\begin{lemma}
If the points are collinear, a nontrivial deformation of the point-camera configurations is
possible with arbitrarily many points and arbitrarily many cameras.
\end{lemma}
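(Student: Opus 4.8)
The plan is to exhibit an explicit continuous family of point--camera configurations, all producing the same picture data, which is \emph{not} obtained from one another by the Euclidean symmetries (global rotation, translation, reflection) nor by the sign flip $P_i,v_j \mapsto -P_i,-v_j$. First I would fix the translational gauge by putting the common line of the points through the origin, so that $P_i = t_i u$ for a single fixed unit direction $u$ and scalars $t_i$; in the normalization used above $P_1=0$ already lies on this line, so no generality is lost. The key observation is that collinearity forces the data to factor as a rank-one product
$$ a_{ij} = P_i \cdot v_j = t_i\,(u\cdot v_j) = t_i\, s_j, \qquad s_j := u\cdot v_j \; , $$
and it is precisely this product structure that hides a scaling freedom.

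Next I would define, for a parameter $\lambda$ near $1$, the deformed points $P_i(\lambda)=\lambda\, t_i\, u$ together with cameras $v_j(\lambda)$ whose projection onto the line is rescaled, $u\cdot v_j(\lambda)=s_j/\lambda$. In the plane such a camera is the unit vector $v_j(\lambda)=(s_j/\lambda)\,u\pm\sqrt{1-(s_j/\lambda)^2}\,u^{\perp}$, with the branch chosen to reduce to $v_j$ at $\lambda=1$; in higher dimension one simply tilts the camera plane so that the in-plane part of $u$ shrinks by the factor $1/\lambda$. A one-line check shows the data is preserved,
$$ a_{ij}(\lambda) = P_i(\lambda)\cdot v_j(\lambda) = (\lambda t_i)\,(s_j/\lambda) = t_i s_j = a_{ij} \; . $$
Since $|s_j|\le 1$, the required unit camera certainly exists for every $\lambda\ge 1$, and when every camera is transverse to the line ($|s_j|<1$) it exists on a full open interval $\lambda\in(1-\varepsilon,1+\varepsilon)$, so the family is genuinely positive-dimensional.

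Finally I would verify nontriviality. The mutual distances obey $|P_i(\lambda)-P_k(\lambda)| = \lambda\,|t_i-t_k|$, so for $\lambda\neq 1$ the deformed configuration has strictly different inter-point distances and therefore cannot be the image of the original under any isometry, nor under the sign ambiguity; the deformation is thus not one of the known symmetries. As nothing in the construction restricted the numbers $n$ or $m$, the same one-parameter family works for arbitrarily many points and arbitrarily many cameras, which is the assertion.

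The one step needing care is the camera-existence constraint $|s_j/\lambda|\le 1$: I must confirm that the prescribed rescaled projection of $u$ is realized by an honest orthographic camera --- a unit vector in the plane, an orthonormal frame in space --- and that the chosen branch depends continuously on $\lambda$. This is exactly where collinearity is indispensable: for non-collinear points (with $P_1=0$) the data matrix $PV^{\top}$ has rank two in general, so it no longer factors through a single scalar per point and per camera, and the rank-one scaling trick breaks down.
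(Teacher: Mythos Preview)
Your proof is correct and rests on the same observation the paper uses: when the points are collinear the data factor as $a_{ij}=t_i s_j$, a rank-one product, and this hidden product structure carries a one-parameter rescaling freedom. The paper implements the deformation a little differently---it keeps the first camera fixed as the $x$-axis and moves $P_2$ along the vertical line $x=1$ (so the first camera's data are preserved automatically), setting $P_k(t)=\lambda_k P_2(t)$ and then rotating the remaining cameras to match---whereas you keep the line direction $u$ fixed, scale all the $t_i$ by $\lambda$, and tilt every camera so that $u\cdot v_j(\lambda)=s_j/\lambda$. Your version makes the rank-one mechanism explicit and treats all cameras symmetrically, and you actually justify the step the paper leaves as an assertion (``the cameras can be rotated so that the scalar projection stays the same'') via the constraint $|s_j/\lambda|\le 1$; the paper's version, on the other hand, respects the gauge ``first camera $=$ $x$-axis'' used in the surrounding theorem. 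Both parametrizations are valid realizations of the same rank-one degeneracy.
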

\begin{proof}
The first  point is fixed at the origin $P_1=(0,0)$. Let $P_2(t) = (1,1+t)$. Draw arbitrarily
many cameras for $t=0$. If the point $P_2(t)$ is changed, the cameras can be rotated so that
the scalar projection of $P_2(t)$ onto the lines stay the same. Also the pictures of any 
scalar multiple $P_k(t) = \lambda_k P_2(t)$ stay the same.  
\end{proof}

\begin{center} \parbox{6.5cm}{\scalebox{0.60}{\includegraphics{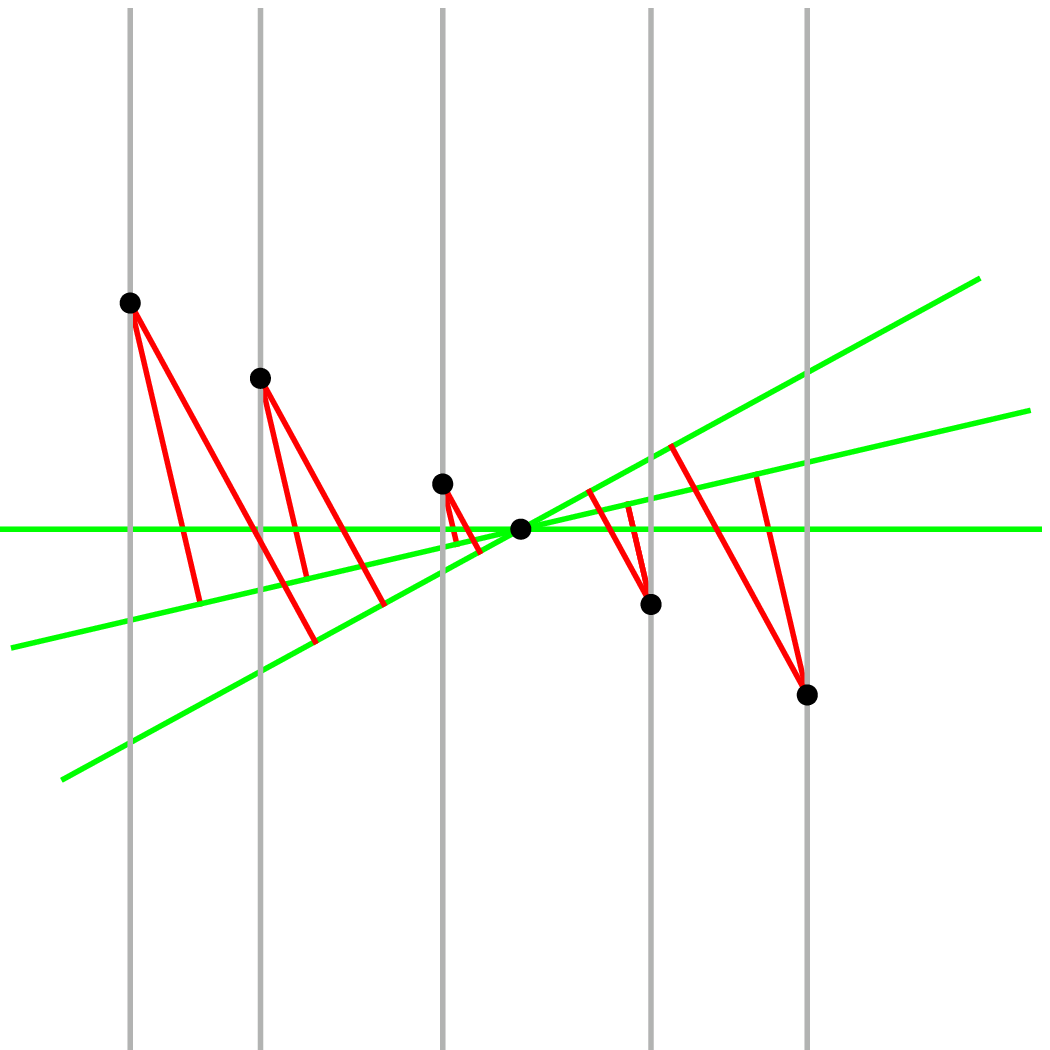}}} \end{center}
\begin{fig}
For arbitrarily many orthographic affine cameras and arbitrarily many points,
there are ambiguities if the points are collinear.
The picture shows 3 cameras and 6 points. The deformation of one camera
plane deformes the points which in turn forces an adjustment of the other camera planes.
\end{fig}

It is also not possible to reduce the number of cameras to two cameras, stereo vision.

\begin{lemma}
With two cameras, a deformation is possible with arbitrarily many points.
\end{lemma}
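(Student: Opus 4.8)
The plan is to exhibit an explicit one-parameter family of point-camera configurations, all producing the same image data, thereby ruling out local uniqueness with only two cameras. First I would fix the gauge exactly as in the theorem above: place the first point at the origin $P_1=(0,0)$ and take the first camera to be the $x$-axis, so $v_1=(1,0)$. For $n$ points this leaves as unknowns the heights $y_2,\ldots,y_n$ together with the single angle $\alpha$ of the second camera $v_2=(\cos\alpha,\sin\alpha)$, a total of $n$ parameters. The available data are the $n-1$ second-camera readings $a_{i2}=P_i\cdot v_2$ for $i=2,\ldots,n$; the first-camera readings merely recover the already-known abscissae $x_i$, and the origin point $P_1$ contributes nothing since $a_{1j}=0$ for all $j$. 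Since there is one more unknown than equation, a deformation should persist no matter how large $n$ is.

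To make the deformation explicit I would treat the second camera angle $\alpha$ as the deformation parameter and solve the constraints for the heights. Keeping every $x_i$ fixed (hence every first-camera projection fixed), the requirement that the second-camera data $a_{i2}$ be preserved reads $x_i\cos\alpha+y_i\sin\alpha=a_{i2}$, which for $\sin\alpha\neq 0$ I can solve as
$$ y_i(\alpha) = \frac{a_{i2}-x_i\cos\alpha}{\sin\alpha} \; , \quad i=2,\ldots,n \; . $$
As $\alpha$ varies over a small interval avoiding the parallel directions $\sin\alpha=0$, each height $y_i(\alpha)$ varies smoothly while all readings $a_{ij}$ stay constant by construction. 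Because each $y_i$ is recovered independently from its own reading, the same formula works verbatim for arbitrarily many points, which is exactly the content of the lemma.

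Finally I would check that the family is a genuine nontrivial deformation rather than a gauge artifact or the discrete reflection ambiguity already noted. The origin and the first camera are held fixed throughout, so the residual translational and rotational symmetry is completely frozen; the motion of $\alpha$ and the induced motion of the points $P_i$ are therefore physical. Differentiating $y_i(\alpha)$ shows $dy_i/d\alpha\neq 0$ generically, so the curve of configurations is nonconstant and continuous rather than discrete. For such a statement there is no serious analytic obstacle; the one point I would be most careful about is the gauge bookkeeping, namely confirming that fixing $P_1$ and $v_1$ absorbs exactly the Euclidean freedom, so that the surviving one-parameter family genuinely reflects the underdetermination coming from having only two cameras and is not a disguised symmetry of the problem.
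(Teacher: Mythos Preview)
Your argument is correct and follows essentially the same route as the paper's proof: fix the first camera as the $x$-axis, rotate the second camera, and compensate by adjusting the heights $y_i$ so that both projections are preserved; your explicit formula $y_i(\alpha)=(a_{i2}-x_i\cos\alpha)/\sin\alpha$ just makes the paper's geometric description precise. Your additional bookkeeping on the gauge freedom and the dimension count is a welcome clarification but not a different idea.
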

\begin{proof}
One camera is the x-axes. Take $n$ points. We can move them so that the projection onto the
$x$-axis stays the same and so that the scalar projection to the second camera keeps the 
same distance from the origin.

\begin{center} \parbox{6.5cm}{\scalebox{0.60}{\includegraphics{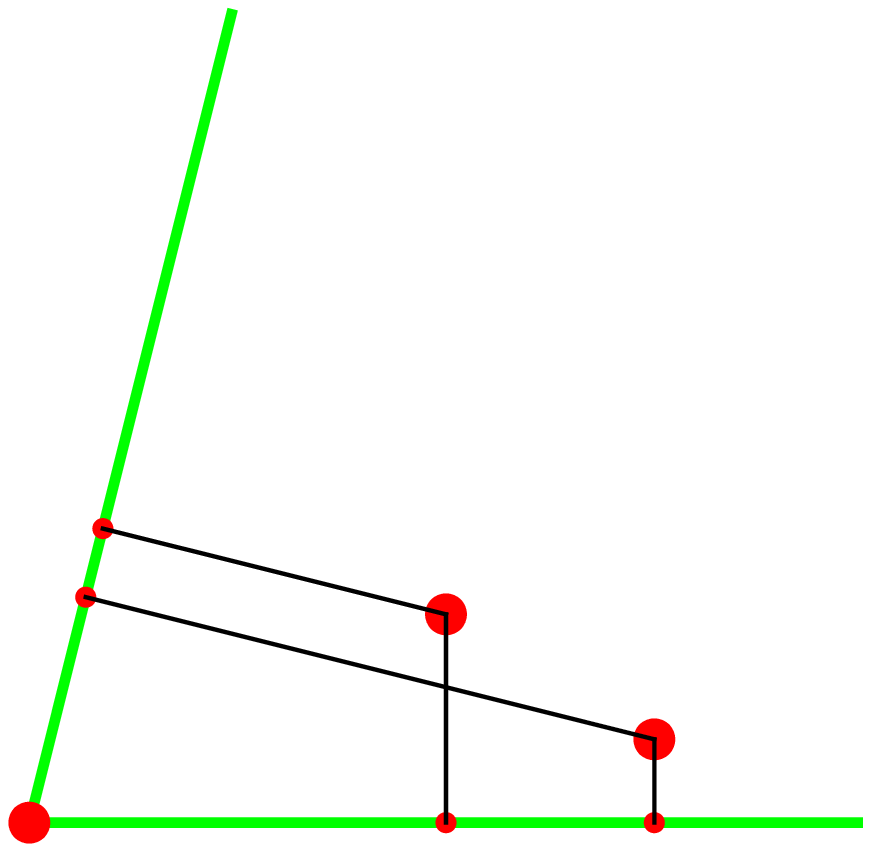}}} \end{center}
\begin{fig}
With two cameras, there are always ambiguities. Fix the first camera as the $x$-axes and
let the second camera be the line $r(s) = s v$. Take image coordinates $x_1,\dots,x_n$ and
$s_1,\dots,s_n$. Define $P_i$ as the intersection to the lines $x=x_i$ with lines perpendicular
to $v$ through $r(s_i)$. Now, if we deform the second camera by turning the vector $v$, we
also have a deformation of the points $P_i$ without changing the image data.
\end{fig}

Is the nonlinear map $F$ surjective? The answer is not obvious because intuition does 
not help much. We have difficulties to visualize 3 points in the plane and three lines, if we 
know the scalar projections onto the three lines. We tried and failed first to prove that 
the map $F$ is surjective. Indeed, the answer is: no, the map $F$ is not surjective. 
The transformation $F$ maps ${\rm R}^4$ to a proper subset of ${\rm R}^4$. 
The boundary is the image of the set ${\rm det}(D F)=0$ which is the set of situations, where 
points are collinear or a camera collision happens, or a point collision happens. Below we 
will look at explicit configurations which are not in the image. \\

How do we invert the structure from motion map $F$? Let us reformulate the problem in index-free
notation. Assume the point coordinates are $(u,p),(v,q)$ and the camera angles are $\alpha,\beta$.
The picture of the first point has coordinates $(a,b)$ and the picture of the second point has 
coordinates $(c,d)$. We want to find $\alpha,\beta,p,q$ from the equations
\begin{eqnarray*}
    \label{ullman2d}
    u \cos(\alpha) +  p \sin(\alpha) &=& a \\
    u \cos(\beta)  +  p \sin(\beta)  &=& c \\
    v \cos(\alpha) +  q \sin(\alpha) &=& b \\
    v \cos(\beta)  +  q \sin(\beta)  &=& d \; . 
\end{eqnarray*}
(The scalars $p,q$ have no relation with the vectors $p$ and $q$ used in the three
dimensional problem mentioned in the introduction).
After eliminating $p$ and $q$, we get $\alpha,\beta$ as solutions of the two equations
\begin{eqnarray*}
 \frac{\sin(\alpha)}{a-u \cos(\alpha)} &=& \frac{\sin(\beta)}{c-u \cos(\beta)}  \\
 \frac{\sin(\alpha)}{b-v \cos(\alpha)} &=& \frac{\sin(\beta)}{d-v \cos(\beta)} \; . 
\end{eqnarray*}
Solutions as intersections of level curves of two functions on the two dimensional torus. 
With $x=\cos(\alpha),y = \cos(\beta)$ we get
\begin{eqnarray*}
  (1-x^2) (a-u y) &=& (1-y^2) (c-u x) \\
  (1-x^2) (b-v y) &=& (1-y^2) (d-v x) \; . 
\end{eqnarray*}
This system of quadratic equations has explicit solutions. They are the first two
of the following set of 4 solution formulas. The $p$ and $q$ can be obtained directly.
\begin{eqnarray*}
x=\cos(\alpha)&=&\frac{(-c^2+d^2)u^2+2acuv-2bd( ac+uv)+a^2(d^2-v^2)+b^2(c^2+v^2)}{2(bc-ad)(-du+bv)} \\
y=\cos(\beta) &=&\frac{( c^2-d^2)u^2-2acuv+2bd(-ac+uv)+b^2(c^2-v^2)+a^2(d^2+v^2)}{2(bc-ad)( cu-av)} \\
p  &=& (a-u \cos(\alpha))/\sin(\alpha) \\
q  &=& (b-v \cos(\alpha))/\sin(\alpha)  \; . 
\end{eqnarray*}
We see that there are $0,1$ or $2$ real solutions and that the only ambiguities are
$(\alpha,\beta,p,q) \to (-\alpha,-\beta,-p,-q)$ because the original equations require 
$\alpha$ and $\beta$ to switch signs simultaneously. 
\end{proof}


\begin{center}
\parbox{7.5cm}{\scalebox{0.85}{\includegraphics{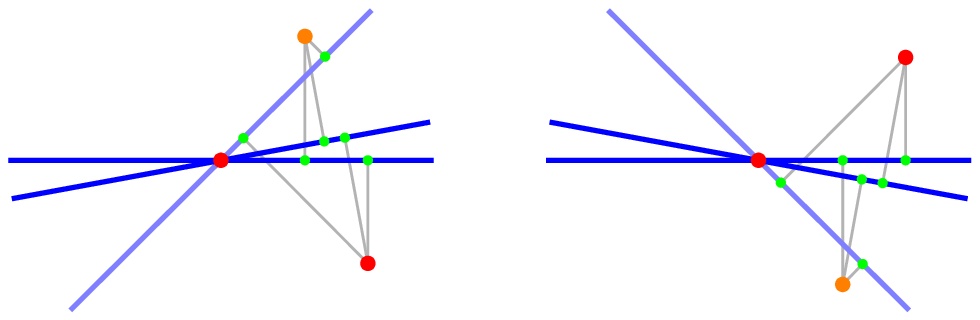}}}
\end{center}
\begin{fig}
The nonlinear map $F$ is $2:1$. Here are the two solutions to a 
typical image set. 
\end{fig}

\section{The image of $F$}

\begin{lemma}
The image of $F$ is a proper subset of all photographic image data. 
\end{lemma}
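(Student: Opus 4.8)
The plan is to produce explicit photographic data $(a,b,c,d)$ that no real point--camera configuration can generate; this alone shows that the image of $F$ is a proper subset. The key is the explicit inversion obtained above. Eliminating $p$ and $q$ forces $\cos(\alpha)$ and $\cos(\beta)$ to equal the two displayed rational functions of the data (for the fixed, known $x$-coordinates $u,v$), so $\cos(\alpha)=N_1/D_1$ with $D_1=2(bc-ad)(bv-du)$ and similarly $\cos(\beta)=N_2/D_2$. Since these values are \emph{uniquely} determined by the two quadratic relations, every genuine reconstruction must satisfy the reality constraints $|\cos(\alpha)|\le 1$ and $|\cos(\beta)|\le 1$. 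Conversely, a data point is unrealizable the moment one of these rational quantities leaves $[-1,1]$, and this is what I would exploit.

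First I would fix convenient $u,v$ and view $\cos(\alpha)=N_1/D_1$ as a nonconstant rational function on the $(a,b,c,d)$-space. The numerator $N_1$ does not vanish identically along the hypersurface $\{D_1=0\}=\{(bc-ad)(bv-du)=0\}$, so as the data approaches a generic point of that surface one has $|N_1/D_1|\to\infty$. Hence there is an open region of the data space on which $|\cos(\alpha)|>1$, and any $(a,b,c,d)$ chosen there lies outside the image. Equivalently, one may substitute specific integer values for $u,v,a,b,c,d$ and simply check that $|N_1/D_1|>1$, exhibiting a concrete nonrealizable picture triple. Since $u,v$ are themselves arbitrary first-camera readings, this also shows properness in the full data space.

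The step requiring care --- and the main obstacle --- is to confirm that the chosen data is excluded from the \emph{entire} image rather than only from the branch on which the inversion formulas were derived, since those assumed $\sin(\alpha)\ne 0$, $\sin(\beta)\ne 0$ and $D_1,D_2\ne 0$. Because $\cos(\alpha)$ is uniquely pinned down by the elimination, the bound $|\cos(\alpha)|\le 1$ is necessary on every branch, so it remains only to rule out the finitely many degenerate loci (a camera aligned with the $x$-axis, coincident cameras, or collinear points) at the chosen point. Finally I would record the structural reason behind the failure of surjectivity suggested by the figure: away from its critical set $F$ is a local diffeomorphism that is generically two--to--one, with the two sheets interchanged by the orientation--reversing reflection $(\alpha,\beta,p,q)\mapsto(-\alpha,-\beta,-p,-q)$, so the two preimages contribute with opposite orientation and the image can possess a genuine boundary. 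That boundary is the set of critical values $F(\{\det(DF)=0\})$, consistent with the factorization $\det(DF)=\sin(\alpha)\sin(\beta)\sin(\alpha-\beta)(uq-vp)$ describing coincident cameras, a camera along the $x$-axis, or collinear points.
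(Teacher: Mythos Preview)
Your argument is correct and follows the same strategy as the paper: use the explicit inversion formula for $\cos(\alpha)$ as a rational function of the data and observe that data for which this rational expression falls outside $[-1,1]$ cannot come from a real configuration. The paper carries this out more concretely by first restricting to the slice $a=0,\ d=0,\ b=u$, where the formula collapses to $\cos(\alpha)=v/(2c)$, and then exhibiting the explicit unrealizable data point $(u,v,a,b,c,d)=(1,5,0,1,1,0)$; you instead argue abstractly that $|N_1/D_1|\to\infty$ near generic points of $\{D_1=0\}$ and add a careful discussion of the degenerate branches and the topological reason the image has a boundary, which the paper leaves implicit.
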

\begin{proof}
To see this,
lets look at the two-dimensional surface defined by $a=0,d=0,b=u$. The inverse of $F$ 
on this set is 
\begin{eqnarray*}
    \alpha &=& {\rm arccos}(D) \\
    \beta  &=& {\rm arccos}(1-2D^2) \\
    p      &=& -b D (1-D^2)^{-1/2} \\
    q      &=& (2c^2-v^2) (1-D^2)^{-1/2} \; , 
\end{eqnarray*}
where $D=v/2c$. The inverse exists for $|D| \leq 1$. For $|D|>1$
the camera and point data become complex. \\

The image data $(u,v,a,b,c,d)=(1,5,0,1,1,0)$ for example do not correspond to
any actual situation of points $(p,u),(q,v)$ and cameras with direction $\alpha,\beta$. 
It is the situation, where the points are seen with the 
first camera at $1$ and $5$, by the second camera at $0$ and $1$ and by the 
third camera at $1$ and $0$. 
\end{proof}

Let's look a bit closer at the system of nonlinear equations~(\ref{ullman2d}).
Dividing the first to the second shows that $x$ and $y$ are related by a 
M\"obius transform $x=Ay$. A second equation relating $\alpha$ and $\beta$ is obtained from
\begin{eqnarray*}
    \sin(\alpha) (c-u \cos(\beta)) &=& \sin(\beta) ( a-u \cos(\alpha))  \\
    \sin(\alpha) (d-v \cos(\beta)) &=& \sin(\beta) ( b-v \cos(\alpha))
\end{eqnarray*} 
so that
$$  \frac{\sin(\alpha) c - \sin(\beta)a}{\sin(\alpha) d - \sin(\beta) b} = \frac{u}{v} $$
We see that $u/v$ is the M\"obius transform of $\sin(\beta)/\sin(\alpha)$. Because the inverse
of a M\"obius transform is again a M\"obius transform, we know that $\sin(\beta)/\sin(\alpha)$
is the M\"obius transform of $u/v$. This is a real number $G$. The upshot is that we have two equations
\begin{eqnarray*}
 \sin(\beta)  &=& G \sin(\alpha)  \\
 \cos(\beta)  &=& \frac{A+B \cos(\alpha)}{C+D \cos(\alpha)}  
\end{eqnarray*}
for the unknowns $\alpha,\beta$. This quartic equation for $\cos(\beta)$ has $0$ or $2$ solutions
as the explicit formulas show. There is no solution for example, if the M\"obius transformation maps 
$[-1,1]$ to an interval disjoint from $[-1,1]$. \\

{\bf Remark.} Equations~(\ref{ullman2d}) can also be rewritten with 
$r=u+ip, s=v+iq, z=\cos(\alpha) - i \sin(\alpha), 
w=\cos(\beta) - i \sin(\beta)$ and complex $a,b,c,d$ as
$$    r z = a, r w = b, s z=c, s w=d    $$
where $|z|=1,|w|=1,{\rm Re}(r),{\rm Re}(s),{\rm Re}(a),{\rm Re}(b),{\rm Re}(c),{\rm Re}(d)$ 
are known. These are 8 equations for $8$ unknowns. It is a reformulation with
more variables but the equations look simpler.  \\

We summarize:

\begin{thm}
The structure from motion map $F$ for three points and three cameras in the plane 
maps $R^4$ onto a proper subset of $R^4$. The set ${\rm det}(F)=0$ is the
set of camera-point configurations for which two cameras are the same or where the three points are collinear.
This set is maped into the boundary of the data set $F({R}^4)$. The map $F$ is $2:1$ in general. The
only ambiguity is a reflection. The explicit reconstruction of the points $P_2=(u,p),P_3=(v,q)$ and
camera angles $\alpha,\beta$ is 
\begin{eqnarray*}
\cos(\alpha)&=&\frac{(-c^2+d^2)u^2+2acuv-2bd( ac+uv)+a^2(d^2-v^2)+b^2(c^2+v^2)}{2(bc-ad)(-du+bv)} \\
\cos(\beta) &=&\frac{( c^2-d^2)u^2-2acuv+2bd(-ac+uv)+b^2(c^2-v^2)+a^2(d^2+v^2)}{2(bc-ad)( cu-av)} \\
p  &=& (a-u \cos(\alpha))/\sin(\alpha) \\
q  &=& (b-v \cos(\alpha))/\sin(\alpha)  \; . 
\end{eqnarray*}
if the second point $P_2$ is captured by the second camera $Q_2$ at $a$ and by the third camera $Q_3$ at $b$
and the third point $P_3$ is seen with the second camera at $c$ and with the third camera $Q_3$ at $d$.
\end{thm}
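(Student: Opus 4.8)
The statement gathers several conclusions that the preceding analysis has already prepared, so the plan is to assemble them while making the derivation of the explicit formulas self-contained. I would split the argument into four parts: the reconstruction formulas, the $2:1$ multiplicity together with its reflection ambiguity, the description of the critical set, and the non-surjectivity.

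For the formulas I would eliminate the two heights $p,q$ from the four defining equations~(\ref{ullman2d}). Writing $p$ and $q$ from the first relevant camera and equating them with their values from the other turns the problem into two equations that are homogeneous and linear in $\sin(\alpha),\sin(\beta)$; a nontrivial solution forces the determinant to vanish, which, after the $xy$-terms cancel, is a single \emph{linear} relation between $x=\cos(\alpha)$ and $y=\cos(\beta)$, while the proportionality $\sin(\alpha):\sin(\beta)=(a-ux):(c-uy)$ combined with $\sin^2=1-\cos^2$ gives a second relation $(1-x^2)(c-uy)^2=(1-y^2)(a-ux)^2$. Eliminating one cosine between the linear relation and this second relation produces the quartic in $\cos(\beta)$ referred to earlier, which has $0$ or $2$ real roots; solving and keeping the physical root yields the displayed rational expressions for $\cos(\alpha)$ and $\cos(\beta)$, after which $p,q$ are read off from the linear equations. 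This elimination is the main obstacle, and the delicate step is to show that the second algebraic root is extraneous -- it produces $|\cos(\alpha)|>1$ and hence complex camera data, exactly the phenomenon seen in the preceding lemma -- so that the physical cosines are pinned down uniquely and the reconstruction formulas come out free of radicals, as the explicit expressions confirm.

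Once $\cos(\alpha)$ and $\cos(\beta)$ are fixed, the $2:1$ statement is a short sign count. Pythagoras determines $|\sin(\alpha)|$ and $|\sin(\beta)|$, and the proportionality $\sin(\alpha):\sin(\beta)=(a-ux):(c-uy)$ fixes their relative sign, leaving only one global sign free: both sines may be replaced by their negatives, and then $p=(a-u\cos\alpha)/\sin(\alpha)$ and $q=(b-v\cos\alpha)/\sin(\alpha)$ flip accordingly. Hence the two preimages are exactly $(\alpha,\beta,p,q)$ and $(-\alpha,-\beta,-p,-q)$, the global reflection already noted in the introduction, and no further ambiguity occurs. Away from the critical set the sines are nonzero, so the two preimages are distinct and $F$ is genuinely $2:1$ there.

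It remains to locate the degenerate configurations and to record the non-surjectivity. For the critical set I would use the Jacobian determinant computed in the proof of the two-dimensional Ullman theorem,
$$ \det(DF)=\sin(\alpha_1)\sin(\alpha_2)\sin(\alpha_1-\alpha_2)(x_1y_2-x_2y_1) , $$
whose factors vanish precisely when the second or the third camera coincides with the first ($\sin(\alpha_1)=0$ or $\sin(\alpha_2)=0$), when the second and third cameras coincide ($\sin(\alpha_1-\alpha_2)=0$), or when the three points, one of them the origin, are collinear ($x_1y_2-x_2y_1=0$). By the implicit function theorem $F$ is a local diffeomorphism wherever this determinant is nonzero and therefore sends such configurations to interior points of $F(R^4)$; consequently the topological boundary of the image is contained in the image of $\{\det(DF)=0\}$, which is the asserted placement of the degenerate configurations on the boundary. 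Finally, the non-surjectivity is exactly the content of the preceding lemma, whose explicit datum $(u,v,a,b,c,d)=(1,5,0,1,1,0)$ lies outside $F(R^4)$, so that $F$ maps $R^4$ onto a proper subset. Collecting these four parts completes the proof.
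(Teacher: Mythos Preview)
Your four-part assembly is exactly how the paper treats this summary theorem: the Jacobian factorisation, the sign count for the reflection ambiguity, and the appeal to the preceding lemma for non-surjectivity all match the paper's own arguments verbatim.

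The one place where your write-up goes astray is the elimination for the cosines. You assert that substituting the linear relation into $(1-x^2)(c-uy)^2=(1-y^2)(a-ux)^2$ produces a quartic with a spurious root that you then have to discard by an ad hoc argument about $|\cos\alpha|>1$. In fact the elimination collapses to a \emph{linear} equation. From the two proportions one gets not only the linear relation $y=Gx+Q$ with $G=(cv-ud)/(av-ub)$ and $Q=(ad-bc)/(av-ub)$, but also the identity $\sin\beta=G\sin\alpha$ (eliminate the $\cos$-terms instead of the $\sin$-terms); equivalently, on the line $y=Gx+Q$ one has $c-uy=G(a-ux)$ identically, so your quadratic relation reduces to $1-y^2=G^2(1-x^2)$. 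Substituting $y=Gx+Q$ into this, the $x^2$ terms cancel and one is left with $2GQx=1-G^2-Q^2$, a single linear equation whose unique solution is the displayed rational expression for $\cos\alpha$ (and then $\cos\beta=G\cos\alpha+Q$). This is why the reconstruction formulas are free of radicals, not because a quartic happens to have an extraneous branch. The paper's own text is admittedly loose at this point --- it also speaks of a ``quartic'' and simply records the Mathematica output --- but your proposed justification via $|\cos\alpha|>1$ is not the correct mechanism, and a reader following your outline literally would get stuck trying to account for three, not one, extra roots.
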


\section{Ullman's theorem in three dimensions}

In three dimensions, the structure from motion map $F(P,Q) = Q_j(P_i)$ for $n$ points and $m$ cameras 
is a nonlinear map from $R^{3n} \times SO_3^m \to  R^{2 n m}$. For $m=3,n=3$, this is a map
from $R^{18}$ to $R^{18}$. By fixing the position of the first point and setting
the first camera as the $xy$-plane, we have a map from $R^{6} \times SO_3^2 \to  R^{12}$
Because the first camera freezes the $x$-coordinates of all the points, four of these equations are 
trivial and we have a nonlinear map from $R^{2} \times SO_3^2 \to  R^{8}$. This map from an 8-dimensional
manifold to an 8-dimensional manifold is finite to 1 and difficult to invert directly. 
Computer algebra systems seem unable to do the inversion, even when replacing rotation angles by quaternions, 
which produce polynomial maps. The crucial idea of Ullman is to perform the reconstruction in two steps.

\begin{center}
\parbox{7.5cm}{\scalebox{0.45}{\includegraphics{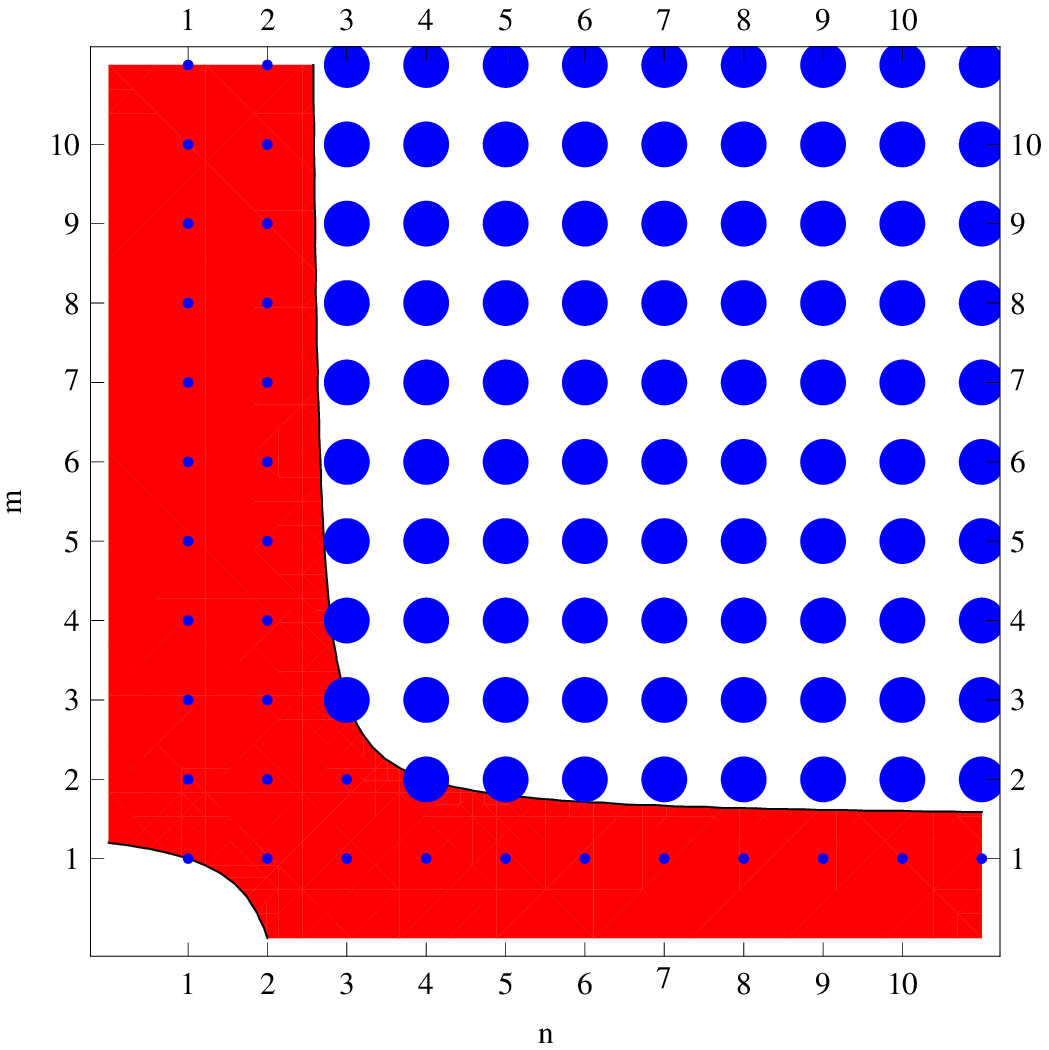}}}
\end{center}
\begin{fig}
The set of dimension pairs $(n,m)$ in three dimensions, for which a
reconstruction is not possible by simple dimensional analysis.
\end{fig}

\begin{thm}[Ullman theorem in three dimensions for 3 points]
In three dimensions, three orthographic pictures of three noncollinear points 
determine both the points and camera positions up to finitely many reflections.
The correspondence is locally unique. 
\end{thm}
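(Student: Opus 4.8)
The plan is to carry out Ullman's two-step philosophy by first extracting the rotation-invariant content of the pictures, solving a small polynomial system for the two unknown depths, and only afterwards recovering the cameras. As in the planar case I normalize $P_1=(0,0,0)$ and take the first camera to be the $xy$-plane, so that the first image fixes $x_i,y_i$ for $i=2,3$ and the only remaining unknowns are the depths $z_2,z_3$ together with the orientations $R_2,R_3\in SO_3$. Writing $n_j=R_j(0,0,1)$ for the normal of camera $j$, so that $\{p_j,q_j,n_j\}$ is an orthonormal frame, the image of $P_i$ in camera $j$ is $(p_j\cdot P_i,\,q_j\cdot P_i)$ and the quantities
$$\rho_{ji}^2=(p_j\cdot P_i)^2+(q_j\cdot P_i)^2,\qquad g_j=(p_j\cdot P_2)(p_j\cdot P_3)+(q_j\cdot P_2)(q_j\cdot P_3)$$
are computable from the pictures (recall $P_1$ projects to the image origin in every camera) and are invariant under the in-plane rotation of camera $j$.

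First I would use the Pythagorean splitting $|P_i|^2=\rho_{ji}^2+(n_j\cdot P_i)^2$ and $P_2\cdot P_3=g_j+(n_j\cdot P_2)(n_j\cdot P_3)$ to introduce the scalar depths $h_{ji}=n_j\cdot P_i$ relative to each plane. These satisfy $h_{ji}^2=|P_i|^2-\rho_{ji}^2$ and $h_{j2}h_{j3}=P_2\cdot P_3-g_j$, so eliminating $h_{j2},h_{j3}$ via $(h_{j2}h_{j3})^2=h_{j2}^2\,h_{j3}^2$ yields, for each $j\in\{2,3\}$, the single equation
$$\bigl(P_2\cdot P_3-g_j\bigr)^2=\bigl(|P_2|^2-\rho_{j2}^2\bigr)\bigl(|P_3|^2-\rho_{j3}^2\bigr).$$
The decisive point is that after substituting $|P_i|^2=x_i^2+y_i^2+z_i^2$ and $P_2\cdot P_3=x_2x_3+y_2y_3+z_2z_3$ the quartic term $z_2^2z_3^2$ cancels on both sides, leaving a genuine conic $\mathcal C_j$ in the depth plane $(z_2,z_3)$. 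Thus the cameras are eliminated and the depth reconstruction reduces to intersecting the two conics $\mathcal C_2,\mathcal C_3$; by B\'ezout this has at most four solutions over $\mathbb{C}$, which already gives the finiteness asserted, and taking the resultant in one variable is exactly the quartic the theorem must solve.

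For each real depth solution $(z_2,z_3)$ the points $P_2,P_3$ are fully known, and since the three points are noncollinear $P_2,P_3$ are linearly independent; the two linear constraints $n_j\cdot P_2=h_{j2}$, $n_j\cdot P_3=h_{j3}$ together with $|n_j|=1$ then determine $n_j$ up to finitely many choices, the residual sign being precisely the reflection of camera $j$ across its plane. With $n_j$ fixed, the in-plane rotation is pinned down as the unique planar orthogonal frame carrying the in-plane parts of $P_2,P_3$ onto the observed image vectors, exactly as in the two-dimensional reconstruction and solvable because the projected triangle is nondegenerate. The global sign symmetry $P_i\mapsto-P_i$, $(p_j,q_j)\mapsto(-p_j,-q_j)$ noted in the introduction together with these per-plane sign choices accounts for the reflection ambiguities, and local uniqueness follows because off the degenerate locus the conics meet transversally, equivalently $\det(DF)\neq0$, so the implicit function theorem makes each reconstruction isolated.

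The hard part will be the bookkeeping of the discrete ambiguities rather than the elimination itself: one must verify that the conic intersection is generically transverse, so that \emph{locally unique} is justified, decide which of the up-to-four intersection points and which normal- and in-plane-sign combinations give genuine real reconstructions as opposed to complex or spurious ones, and confirm that the degenerate cases are exactly collinear points or two coincident cameras, mirroring the factorization of $\det(DF)$ found in the planar theorem. Establishing transversality amounts to showing the resultant of $\mathcal C_2$ and $\mathcal C_3$ has no repeated root for generic image data, and noncollinearity of the points is the hypothesis doing the essential work, since it is what keeps the linear systems for the $n_j$ nondegenerate and prevents a positive-dimensional family of solutions.
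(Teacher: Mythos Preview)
Your approach is correct but takes a genuinely different route from the paper. The paper (following Ullman's original idea) reconstructs the \emph{cameras first}: for each pair of image planes it solves for their common intersection line, expressed simultaneously in both cameras' coordinates, via the system $ax+by=cu+dv$, $ex+fy=gu+hv$, $x^2+y^2=u^2+v^2=1$; this gives up to four solutions per pair and hence up to $4^3=64$ global reconstructions, after which the points are recovered by intersecting perpendiculars through the image points. You instead reconstruct the \emph{points first}: the in-plane rotation invariants $\rho_{ji}^2$ and $g_j$ eliminate the camera orientations outright and reduce the unknown depths $(z_2,z_3)$ to the intersection of two conics, with the normals $n_j$ and in-plane frames recovered afterward from linear equations plus a normalization. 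Your route gives a tighter finiteness count via B\'ezout and makes the global reflection manifest as the central symmetry $(z_2,z_3)\mapsto(-z_2,-z_3)$ of the conics; the paper's route yields explicit inversion formulas ready for implementation and makes the degenerate case of coplanar camera normals (a single common intersection line) reduce transparently to the planar theorem. Both arguments lean on noncollinearity at the same structural point: you need $P_2,P_3$ linearly independent to solve the linear system for $n_j$, while the paper needs it so that the two dot-product equations determining each intersection line are independent. The one place where your outline is thinner than the paper is local uniqueness: the paper writes down $\det(DF)$ explicitly and reads off its factorization, whereas you assert transversality of the conics is equivalent to $\det(DF)\neq0$ without carrying out that check.
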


We assume that the three planes are different and that the three points are different.
Otherwise, we had a situation with $m<3$ and $n<3$, where finding the inverse is not possible. 
If the normals to the planes are coplanar, that is when the three planes go through 
a common line after some translation, then the problem can be reduced to the 
two-dimensional Ullman problem. 

\begin{center}
\parbox{6.2cm}{\scalebox{0.50}{\includegraphics{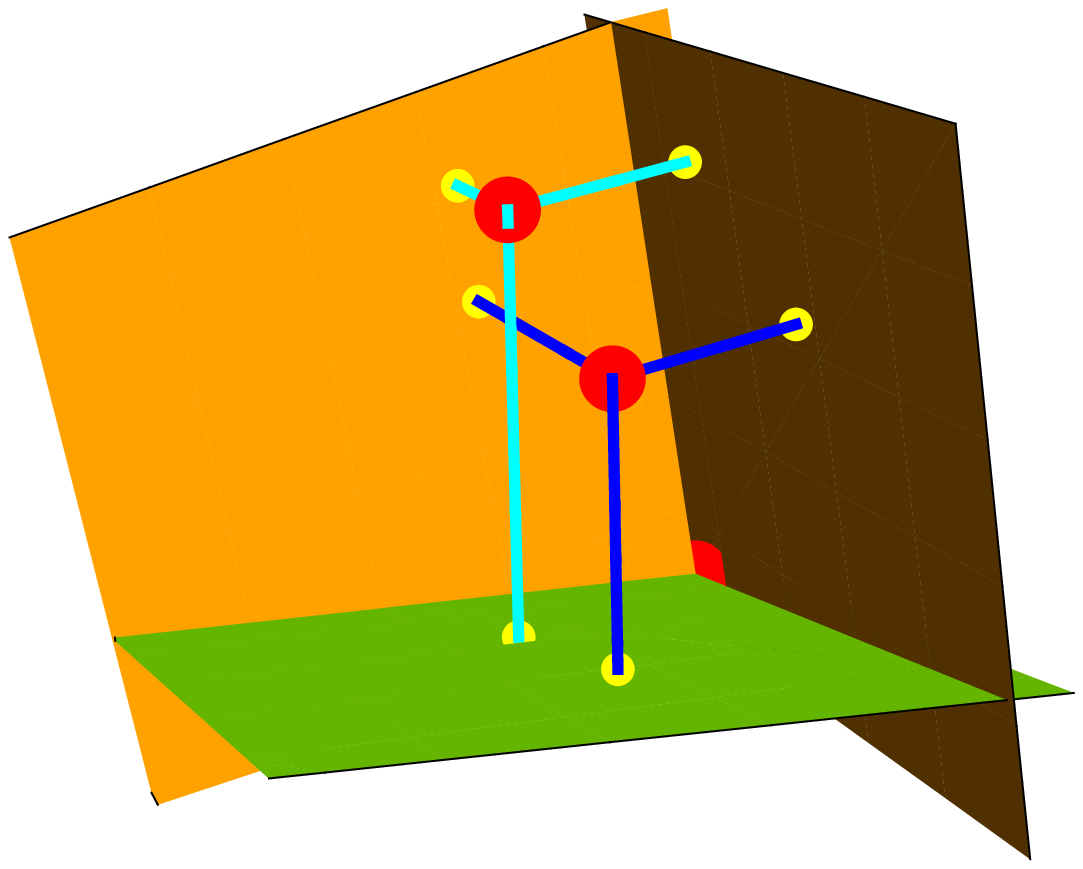}}}
\end{center}
\begin{fig}
The setup for the structure of motion problem with three orthographic cameras and three points
in three dimensions. One point is at the origin, one camera is the $xy$-plane. The problem is to find
the $z$-coordinates of the two points as well as the three Euler angles for each cameras from the 
projections onto the planes. 
\end{fig}

Because Ullman stated his theorem with 4 points and this result is cited so widely
\cite{HoffmanBennett1985,HoffmanBennett1985a,HuAhuja1993,
Bennetetall1993,Koenderink,HoffmanBennett1994,HuAhuja1995,pritt},
we give more details to the proof of Ullman for 3 points. The only 
reason to add a 4'th point is to reduce the number of ambiguities from typically $64$ to 
$2$. We will give explicit solution formulas which provide an explicit reconstruction with 
in the case of $3$ points. One could write down explicit algebraic expressions for the
inverse. 

\begin{proof}
Again we chose a coordinate system so that one of the cameras is the
$xy$-plane with the standard basis $q_0,p_0$. One of the three points 
$P_1=O$ is fixed at the origin.
The problem is to find two orthonormal frames $p_j,q_j$ in space spanning two planes
$S_1$ and $S_2$ through the origin and two points $P_2,P_3$
from the projection data
\begin{equation}
\label{ullman3d} 
a_{ij} = P_i \cdot q_j, b_{ij} = P_i \cdot p_j  \; . 
\end{equation}
The camera $j$ sees the point $P_i$ at the
position $(a_{ij},b_{ij})$. Because an orthonormal 2 frame needs 3 parameters
$(\theta_i,\phi_i,\gamma_i)$ and each point in space has $3$ coordinates, there are
$2 \cdot 3 + 2 \cdot 3=12$ unknowns and 12 equations $a_{ij} = P_i \cdot q_j$ and $b_{ij} = P_i \cdot p_j$,
$i=1,2, j=0,1,2$. Because the projection to the $xy$ plane is known, there
are 4 variables, which can directly be read off. We are left with a nonlinear system of $8$ equations and
$8$ unknowns $(z_1,z_2,\theta_1,\phi_1,\gamma_1,\theta_1,\phi_2,\gamma_2)$. Just plug in
$$p_j = \left[ \begin{array}{c} \cos(\gamma_j) \cos(\theta_j)-\cos(\phi_j) \sin( \gamma_j) \sin(\theta_j) \\
                                -\cos(\phi_j) \cos(\theta_j) \sin(\gamma_j) - \cos(\gamma_j) \sin( \theta_j) \\
                                \sin(\gamma_j)\sin(\phi_j)
               \end{array} \right] $$
$$  q_j = \left[ \begin{array}{c} \cos(\theta_j)\sin(\gamma_j)+\cos(\gamma_j)\cos(\phi_j)\sin(\theta_j) \\
                                \cos(\gamma_j)\cos(\phi_j)\cos(\theta_jg)-\sin(\gamma_j)\sin(\theta_j) \\
                               -\cos(\gamma_j)\sin(\phi_j)  \end{array} \right] 
$$
and $P_i = (x_i,y_i,z_i)$ into equations~(\ref{ullman3d}). The determinant of the Jacobean matrix can 
be computed explicitely. It is a polynomial in the $2$ unknown position variables $z_1,z_2$ and a trigonometric 
polynomial in the $6$ unknown camera orientation parameters $\theta_1,\phi_1,\gamma_1,\theta_2,\phi_2,\gamma_2$:
\begin{eqnarray*}
\det(J) &=& \sin^2(\phi_1) \\
       &\cdot& \sin^2(\phi_2) \\
       &\cdot& (A \cos(\theta_1)+B \sin(\theta_1)) \\
       &\cdot& (A \cos(\theta_2)+B \sin(\theta_2)) \\
       &\cdot& (\cos(\phi_2)\sin(\phi_1)(A\cos(\theta_1)+B\sin(\theta_1)) \\
         && +  \sin(\phi_2)(D \sin(\phi_1)\sin(\theta_1-\theta_2)+\cos(\phi_1)(C \cos(\theta_2)-B \sin(\theta_2))))  \; , 
\end{eqnarray*}
where 
$$ A = (y_2 z_1-y_1 z_2), B = (x_2 z_1-x_1 z_2), C = (y_1 z_2-y_2 z_1), D = (y_1 x_2-x_1 y_2) \; . $$
In general, this determinant is nonzero and by the implicit function theorem, the reconstruction 
is locally unique. \\

The main idea (due to Ullman) for the actual inversion is to first find vectors
$u_{ij}$ in the intersection lines of the three planes. For every pair $(i,j)$
of two cameras, the intersection line can be expressed in two ways:
$$ \alpha_{ij} p_i + \beta_{ij} q_i = \alpha_{ij} p_j + \beta_{ij} q_j $$
the projection of the two points produces equations
$$ \alpha_{ij} p_i \cdot P_k + \beta_{ij} q_i \cdot P_k = \alpha_{ij} p_j \cdot P_k + \beta_{ij} q_j \cdot P_k \; . $$
Because $a_{ik}  = p_i \cdot P_k,  b_{ik} = q_i \cdot P_k$ are known, these are $2$ equations
for each of the three pairs of cameras and each of the 4 unknowns
$\alpha_{ij},\beta_{ij},\gamma_{ij},\delta_{ij}$. Because additionally $\alpha_{ij}^2+\beta_{ij}^2=1$,
$\gamma_{ij}^2 + \delta_{ij}^2=1$, the values of $\alpha_{ij},\beta_{ij},\gamma_{ij},\delta_{ij}$ are
determined. \\

On page 194 in the book \cite{Ullman}, there are only 4 equations needed, not
5 as stated there to solve for the intersection lines of the planes. With 5 equations
the number of ambiguities is reduced. Actually, the Ullman equations with 4 equations 
have finitely many additional solutions which do not correspond to point-camera 
configurations. They can be detected by checking what projections they produce.  \\

We aim to find vectors $(\alpha_{ij},\beta_{ij})$ in the plane $i$ and coordinates 
$(\gamma_{ij},\delta_{ij})$ in the plane $j$
in the intersections of each pair $(i,j)$ of photographs. 
Taking the dot products with the two points $P_1,P_2$ gives the equations
\begin{eqnarray}
\label{ullmanequations}
 \alpha_{ij} u_{i1}  + \beta_{ij} v_{i1} &=& \gamma_{ij} u_{j1}  + \delta_{ij} v_{j1}  \\
 \alpha_{ij} u_{i2}  + \beta_{ij} v_{i2} &=& \gamma_{ij} u_{j2}  + \delta_{ij} v_{j2}  \\
 \alpha_{ij}^2 + \beta_{ij}^2 = 1 &,&  \gamma_{ij}^2 + \delta_{ij}^2 = 1 \; . 
\end{eqnarray}
They can be explicitely solved, evenso the formulas given by the computer algebra system 
are too complicated and contain hundreds of thousands of terms. 
Each of the above equations is of the form
$$ a x + b y = c u + d v,  e x + f y = g u + h v, x^2+y^2=u^2+v^2=1  \; . $$
Geometrically, it is the intersection of two three dimensional planes and two three dimensional cylinders in 
four dimensional space. From the first two equations, we have
$$  x = A u + B v,   y = F u + G v  \; . $$
By writing $u = \cos(t), v = \sin(t)$ the equation $x^2+y^2=1$ and replacing $\cos^2(t) = (\cos(2 t) + 1)/2$
$\sin^2(t) = (1-\cos(2 t))/2,  \sin(t) \cos(t) = \sin(2t)/2$, we get a quadratic equation for $\cos(2t)$
which has the solution 
$$ \cos(2 t) =  \frac{-(S T + W \sqrt{S^2-T^2+W^2})}{S^2+W^2} $$
with $U = (A^2+F^2)/2; V = (B^2+G^2)/2; W = (A B + F G); S = U-V; T = U+V-1$. We see that there are
8 solutions to the equations(\ref{ullmanequations}). Four of these solutions are
solutions for which 
$\alpha_{ij} p_i  + \beta_{ij} q_i - \alpha_{ij} p_j - \beta_{ij} q_j$  is perpendicular to the plane containing
the three points. These solutions do not solve the reconstruction problem and these branches of the algebraic
solution formulas are discarded. 
There are $4$ solutions to each Ullman equation which lead to solutions to the reconstruction problem. \\

Assume we know the three intersection lines in each plane. Because the ground camera plane is fixed, we know
two of the intersection lines. Let's denote by $U$ and $V$ the unit vectors in those lines. We have to find 
only the third intersection line which contains a unit vector $X$. 
This vector $X=(x,y,z)$ can be obtained by intersecting two cones. 
Mathematically, we have to solve the system $X \cdot U = r, X \cdot V = s, |X|=1$. 
This leads to elementary expressions by solving a quadratic equation. \\

Once we know the intersection lines, we can get the points
$P_1,P_2$ by finding the intersection of normals lines to the image points in the photographs. \\

The Ullman equations have $4$ solutions maximally. Because there are three
intersection lines we expect $4^3=64$ solutions in total in general. \\

If the normals to the cameras are coplanar, the problem reduces to a 
two-dimensional problem by turning the coordinate system so that the 
intersection line is the $z$-axes. This situation is what Ullman calls the 
{\bf degenerate case}. After finding the intersection line, we are 
directly reduced to the two-dimensional Ullman problem. 
\end{proof}

The fact that there are solutions to the Ullman equation
which do not lead to intersection lines of photographic planes could have been 
an additional reason for Ullman to add a 4'th point. Adding a 4'th point reduces the
number of solutions from 64 to 2 if the four points are noncoplanar but it makes 
most randomly chosen projection data unreconstructable. With three points, there is 
an open and algebraically defined set for which a reconstruction is not possible and
and open algebraically defined set on which the reconstruction is possible and 
locally unique. The boundary of these two sets is the image of the set 
${\rm det}(F)=0$. 

\begin{center}
\parbox{16.2cm}{\scalebox{1.20}{\includegraphics{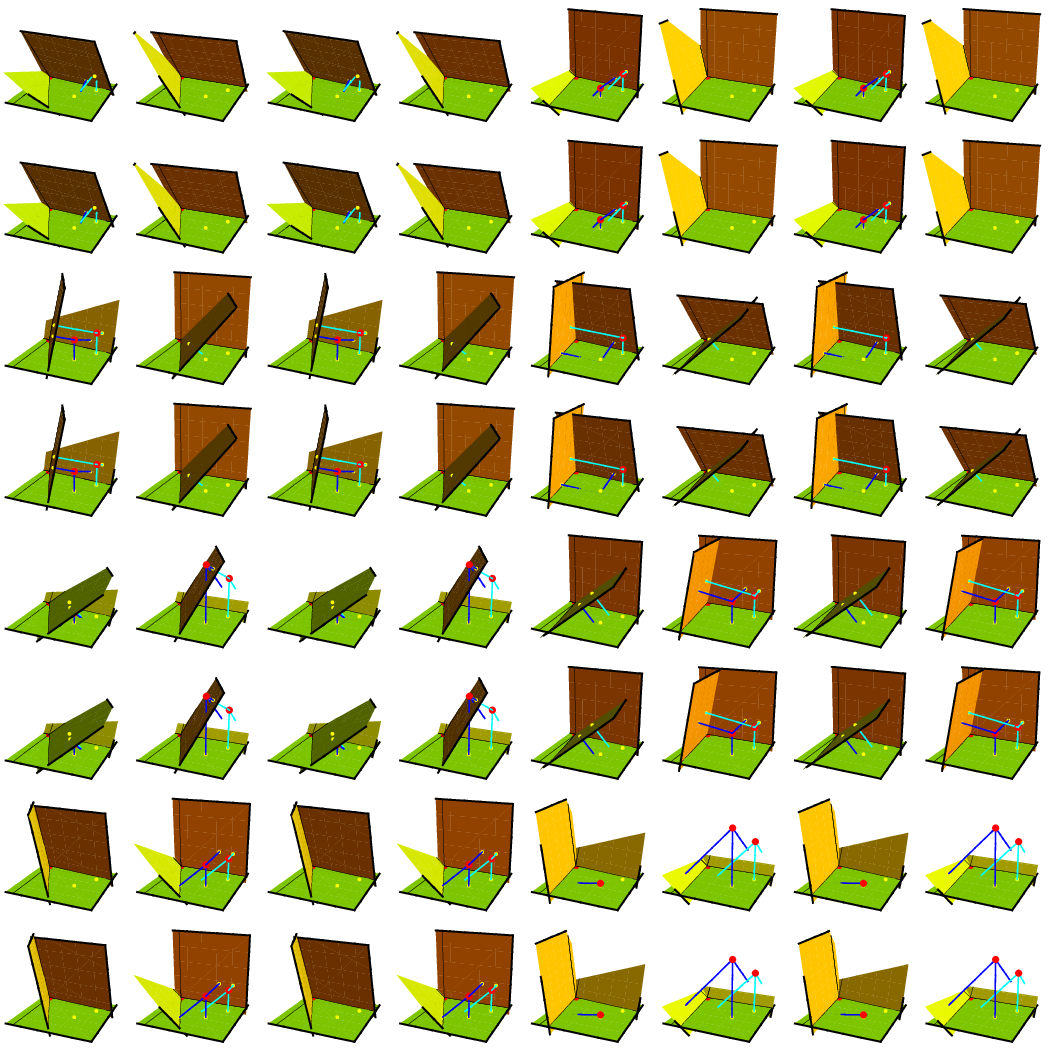}}}
\end{center}
\begin{fig}
64 solutions to the reconstruction problem in a particular case.
\end{fig}

\section{When is the reconstruction possible?}

Given three photographs each showing three points. As usual, we know which points correspond.
How can we decide whether there is a point-camera configuration which realizes this picture? 
Of course, we have explicit formulas, they do not illustrate the geometry very well. \\

Define for two complex numbers $A,B$ the interval $I(A,B)$ of possible angles
$$    {\rm arg} (\frac{ e^{i \theta} - A  }{ e^{i \theta} - B } ) \; , $$
where $\theta \in [0,2\pi)$.

\begin{center}
\parbox{6.2cm}{\scalebox{0.50}{\includegraphics{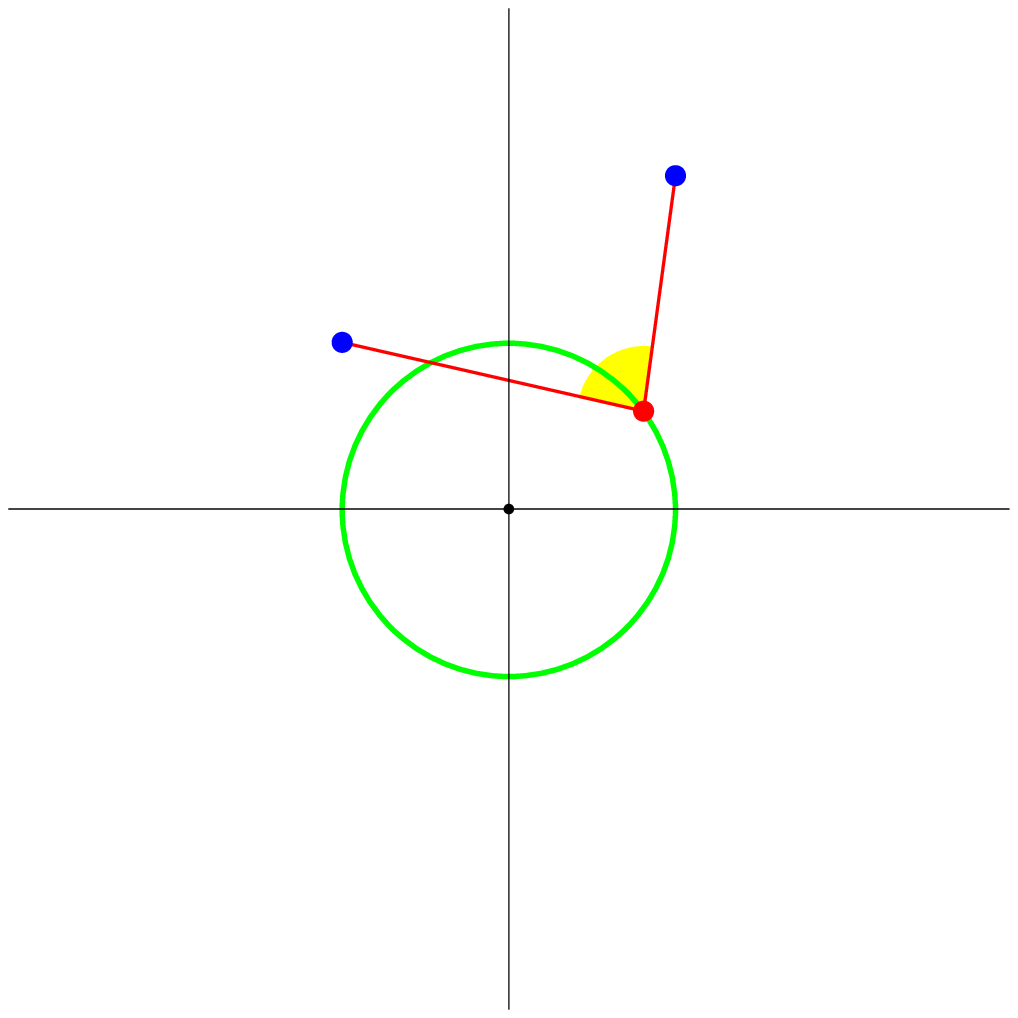}}}
\end{center}
\begin{fig}
The range of angles $I(A,B)$. 
\end{fig}

The following lemma deals with the equations which determine the intersection 
lines of the camera planes. 

\begin{lemma}
The equations
$$ a x + b y = c u + d v,  e x + f y = g u + h v, x^2+y^2=u^2+v^2=1 $$
can be solved for the unknown $x,y,u,v$ for any values of $a,b,c,d,e,f,g,h$
for which
$$ {\rm arg}(\frac{c+id}{g+ih}) \in I(\frac{c+id}{a+ib},\frac{g+ih}{e+if}) $$
\end{lemma}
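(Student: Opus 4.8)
The plan is to complexify the four real unknowns. Writing $\zeta=x+iy$ and $\omega=u+iv$, the two modulus constraints become $|\zeta|=|\omega|=1$, and with the abbreviations $a_1=a+ib$, $c_1=c+id$, $a_2=e+if$, $c_2=g+ih$ the two linear equations read $\mathrm{Re}(\overline{a_1}\zeta)=\mathrm{Re}(\overline{c_1}\omega)$ and $\mathrm{Re}(\overline{a_2}\zeta)=\mathrm{Re}(\overline{c_2}\omega)$, because $ax+by=\mathrm{Re}(\overline{a_1}\zeta)$ and similarly for the others. Thus both $\overline{a_1}\zeta-\overline{c_1}\omega$ and $\overline{a_2}\zeta-\overline{c_2}\omega$ are purely imaginary. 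This is the same spirit as the complex reformulation $rz=a,\ rw=b,\ sz=c,\ sw=d$ used earlier, where the equations look simpler in complex variables.

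The decisive step is to eliminate the moduli by introducing a single unit variable. Taking complex conjugates and using $\overline\zeta=1/\zeta$ and $\overline\omega=1/\omega$, I divide each conjugated equation by $\overline\omega$ and set $E:=\overline\zeta/\overline\omega$, which has $|E|=1$. The system becomes $a_1E-c_1=-ir_1\omega$ and $a_2E-c_2=-ir_2\omega$ with real $r_1,r_2$. The merit of this move is that $\omega$ now occurs only as a common factor, so dividing the two equations removes it: the original system has a solution exactly when there is a unit $E$ for which $\tfrac{a_1E-c_1}{a_2E-c_2}$ is real. Sufficiency is immediate and constructive: given such an $E$, take $\omega=i(a_1E-c_1)/|a_1E-c_1|$ (a unit number), read off $r_1=|a_1E-c_1|$ and $r_2=r_1\cdot\tfrac{a_2E-c_2}{a_1E-c_1}$, and recover $\zeta=\overline E\,\omega$; the two sign choices for $\omega$ reproduce the reflection ambiguity discarded earlier.

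Next I read off the geometry. Factoring gives $\tfrac{a_1E-c_1}{a_2E-c_2}=\tfrac{a_1}{a_2}\cdot\tfrac{E-A}{E-B}$ with $A=c_1/a_1=\tfrac{c+id}{a+ib}$ and $B=c_2/a_2=\tfrac{g+ih}{e+if}$, which are precisely the two entries of $I(A,B)$ in the statement. Reality of this expression asserts that $\arg\tfrac{E-A}{E-B}$ attains a prescribed value for some unit $E$, i.e. that this value lies in $\{\,\arg\tfrac{e^{i\theta}-A}{e^{i\theta}-B}:\theta\in[0,2\pi)\,\}=I(A,B)$. To describe $I(A,B)$ I would invoke the inscribed-angle theorem: $E\mapsto\tfrac{E-A}{E-B}$ is a M\"obius map carrying the unit circle to a circle $\Gamma$, and $I(A,B)$ is the angular sector that $\Gamma$ subtends at the origin; it is a proper arc unless $\Gamma$ winds around $0$, which happens exactly when precisely one of $A,B$ lies inside the unit circle, and then $I(A,B)=[0,2\pi)$ and the system is solvable for all such data.

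The step I expect to be the main obstacle is the exact bookkeeping of arguments needed to match my criterion to the form $\arg\tfrac{c+id}{g+ih}\in I(A,B)$ stated in the lemma. One must pin down the prescribed angle and its orientation, carrying along the prefactor $a_1/a_2$ and the ``modulo $\pi$'' that arises because ``real'' allows either sign; here a careful use of the identity $\arg(A/B)=\arg\tfrac{c+id}{g+ih}+\arg\tfrac{e+if}{a+ib}$ is what ties the prescribed angle to the endpoints $A,B$. The rest is routine: handle the degenerate configurations where a denominator $a_2E-c_2$ vanishes or where the two coefficient vectors $(a,b),(e,f)$ are parallel, and check that the reconstructed $\zeta,\omega$ satisfy the two original real equations, which holds because every step above is reversible.
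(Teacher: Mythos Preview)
Your approach is essentially the paper's: complexify, set $z=x-iy$, $w=u-iv$, $p=a+ib$, $q=c+id$, $r=e+if$, $s=g+ih$, rewrite the linear equations as $\mathrm{Re}(zp)=\mathrm{Re}(wq)$ and $\mathrm{Re}(zr)=\mathrm{Re}(ws)$, and translate solvability into an argument condition for unit $z,w$. Where you go further is the reduction to the \emph{single} unit variable $E=\overline{\zeta}/\overline{\omega}$ and the criterion that $(a_1E-c_1)/(a_2E-c_2)$ be real; the paper stops at ``two curves on the torus whose intersection points are the solutions'' and then asserts the final condition $\arg(q/s)\in I(q/p,s/r)$ without the intermediate M\"obius analysis. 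Your factorisation $\tfrac{a_1}{a_2}\cdot\tfrac{E-A}{E-B}$ with $A=\tfrac{c+id}{a+ib}$, $B=\tfrac{g+ih}{e+if}$ makes the appearance of $I(A,B)$ transparent, and the identity $\arg(A/B)=\arg\tfrac{c+id}{g+ih}+\arg\tfrac{e+if}{a+ib}$ you flag is exactly what is needed to reconcile your ``$\arg(a_2/a_1)\in I(A,B)\pmod\pi$'' with the stated form; the paper's own proof is equally terse on this bookkeeping, so your caveat about it is well placed rather than a gap.
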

\begin{proof}
Define $p=a+ib, q=c+id, r=e+i f, s=g+i h$.
We look for two complex numbers $z=x-iy,w=u-iv$ of modulus $1$ such that
${\rm Re}(z p) = {\rm Re}( w q), {\rm Re}(u r) = {\rm Re}( v s)$. Therefore
${\rm arg}(z p - w q) = \pi/2, {\rm arg}(z r - w s) = \pi/2$. With $z=e^{i \theta},w=e^{i \phi}$,
this defines two curves on the torus. The solutions are the intersection points.
If ${\rm arg}(q/s) \in I(q/p,s/r)$, there is a solution to the problem.
\end{proof}

\section{Final remarks}

{\bf Explicit implementations}. \\
We have implemented the reconstruction explicitely in Mathematica 6, a computer algebra
system in which it is now 
possible to manipulate graphics parameters. We have programs, which invert the 
nonlinear equations on the spot, both in two and three dimensions.  \\

\begin{center}
\parbox{16.8cm}{
\parbox{8.2cm}{\scalebox{0.40}{\includegraphics{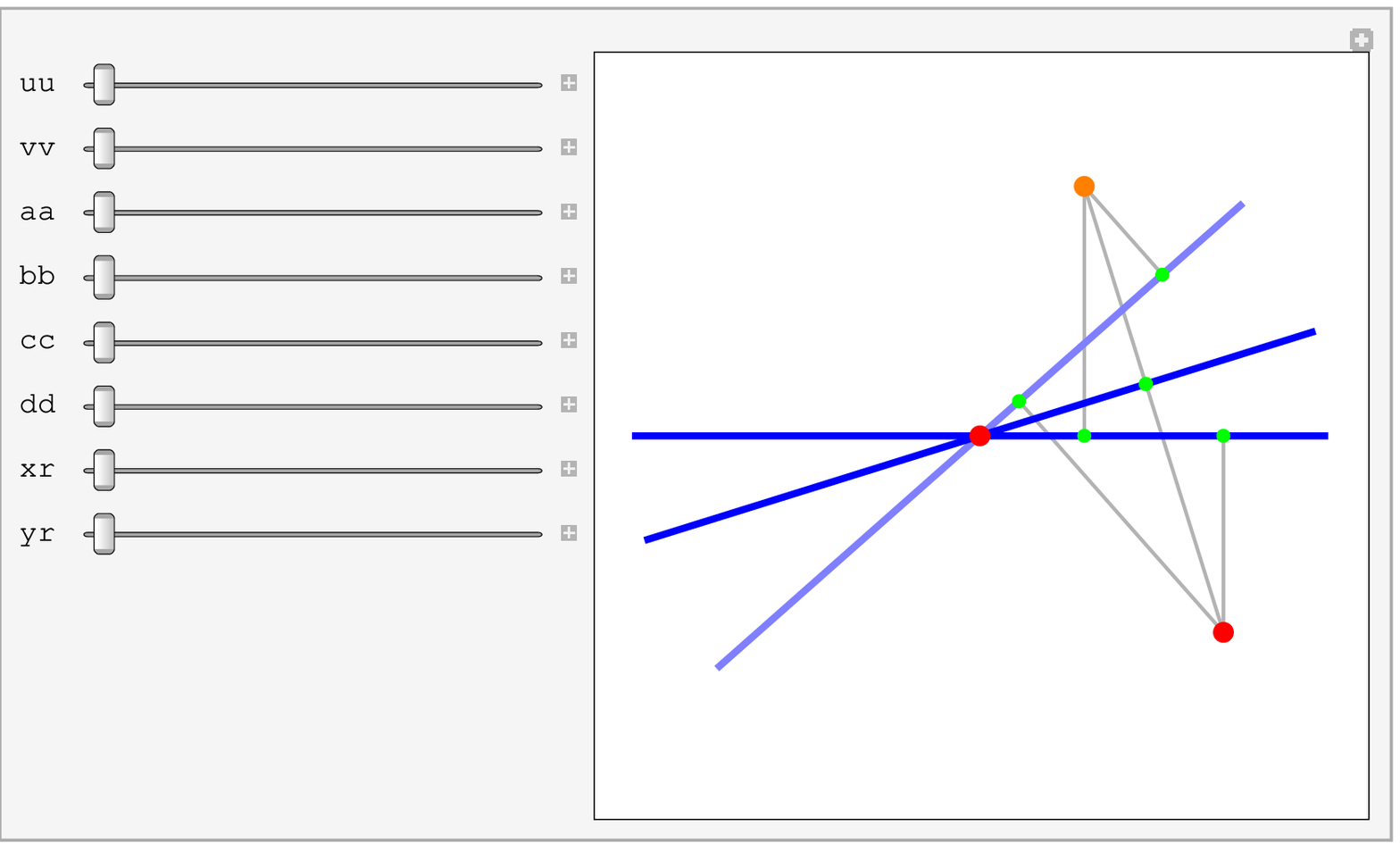}}}
\parbox{8.2cm}{\scalebox{0.40}{\includegraphics{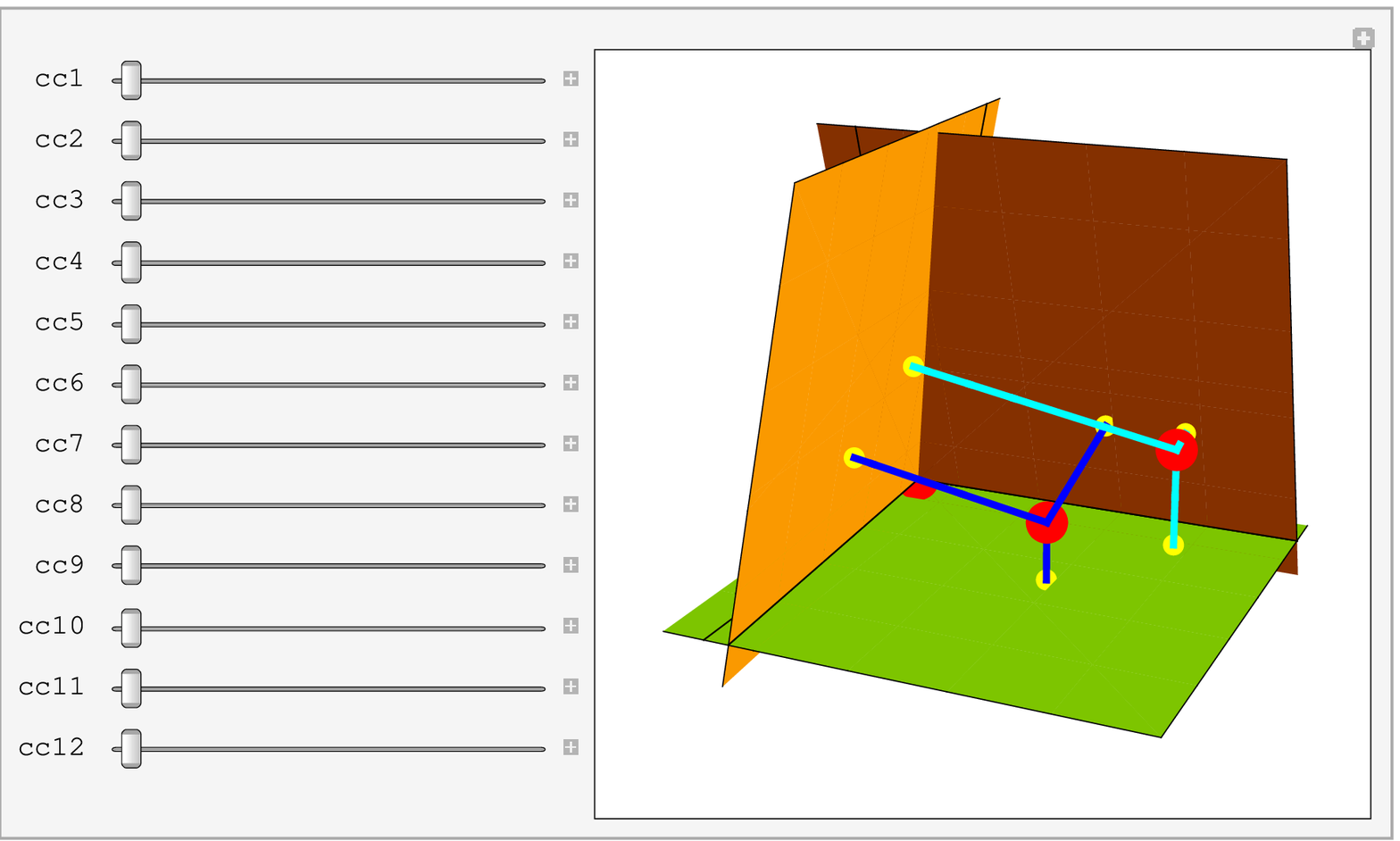}}}
}
\end{center}
\begin{fig}
Interactive demonstration of the reconstruction in two 
and three dimensions with Mathematica. The user can change each of the 
image parameters and the computer reconstructs the cameras and the points. 
We will have this programs available on the Wolfram demonstration project. 
\end{fig}

{\bf Higher dimensions}. \\
How many points are needed in $d$ dimensions for $3$ orthographic cameras to locally have a unique
reconstruction? In $d$ dimensions, an orthographic camera has $f=d(d-1)/2 + (d-1)$ parameters
and the global Euclidean symmetry group has dimension $g = d + d (d-1)/2$. The dimension relations are
\begin{eqnarray*}
 n d + m f &=& (d-1) n m + g  \\
 f &=& d(d-1)/2 + (d-1)   \\
 g &=& d + d (d-1)/2  \; . 
\end{eqnarray*}

This gives \\

\begin{tabular}{|l|llll|} \hline
dimension & $n(m)$                 &  $n(2)$ & $n(3)$  &  $n(4)$ \\ \hline
dim=2:  &   $n = (2m-3)/(m-2)$     &    -    &  $3$    &  $3$ \\ \hline
dim=3:  &   $n = (5m-6)/(2m-3)$    &   $3$   &  $3$    &  $3$ \\ \hline
dim=4:  &   $n = (9m-10)/(3m-4)$   &   $4$   &  $4$    &  $4$ \\ \hline 
\end{tabular}

\vspace{1cm}


In any dimension, there is always a reflection ambiguity.  \\

{\bf Other cameras}. \\
The structure from motion problem can be considered for many other camera types. The most common is the 
pinhole camera, a perspective camera. In that case, two views and 7 points are enough to determine 
structure from motion locally uniquely, if the focal parameter is kept the same in both shots and
needs to be determined too.
We have studied the structure from motion problem for spherical cameras in detail in the paper
\cite{KnillRamirezOmni} and shown for example that for 
three cameras and three points in the plane a {\bf unique} reconstruction is possible if both the camera and point 
sets are not collinear and the 6 points are not in the union of two lines. 
This uniqueness result can be proven purely geometrically using Desarques theorem and is sharp: weakening any 
of the three premises produces ambiguities, where the two line ambiguity was the hardest to find. \\

{\bf Other fields}. \\
The affine structure of motion problem can be formulated over other fields too, and not 
only over the field of reals $k={\bf R}$ or complex numbers $k={\bf C}$. 
The space $S$ is a $d$- dimensional vector space over some field $k$. 
A camera is a map $Q$ from $N$ to a $(d-1)$-dimensional linear
subspace satisfying $Q^2=Q$. A point configuration $\{ P_1,P_2,...,P_n \; \}$
and a camera configuration $\{Q_1, \dots, Q_m \; \}$ 
define image data $Q_j(P_i)$. The task is to reconstruct
from these data the points $P_i$ and the cameras $Q_j$. 
If the field $k$ is finite, the structure from motion problem is a problem in a finite
affine geometry. If the inversion formulas derived over the reals make sense in that field, then they 
produce solutions to the problem. "Making sense" depends for example, whether we can take square 
roots. We might ask the field $k$ to be algebraicalliy complete so that 
a reconstruction is possible for all image data.  \\

{\bf A question}. For orthographic cameras in the plane, the only ambiguities are a reflection.
One can extend the global symmetry group $G$ so that the map $F$ becomes injective. Can one extend
the group in three dimensions also to make the structure from motion map $F$ globally injective? 
To answer this, we would need to understand better the structure of the finite set 
$F^{-1}(a)$ if $a$ is in the image of $F$.

\vspace{12pt}
\bibliographystyle{plain}
\bibliography{3dimage}

\begin{thebibliography}{10}

\bibitem{HoffmanBennett1985a}
B.M. Bennett and D.D. Hoffman.
\newblock The computation of structure from fixed-axis motion: Nonrigid
  structures.
\newblock {\em Biol. Cybern. 51}, pages 293--300, 1985.

\bibitem{HoffmanBennett1994}
B.M. Bennett and D.D. Hoffman.
\newblock Inferring 3d structure from three points in rigid motion.
\newblock {\em Journal of Mathematical Imaging and Vision}, 4, Issue
  4:401--406, 1994.

\bibitem{Bennetetall1993}
Donald D.~Hoffman Bruce M.~Bennett and Jin~S. Kim.
\newblock Inferring 3d structure from image motion: The constraint of poinsot
  motion.
\newblock {\em J. Math. Imaging and Vision}, 3:143--166, 1993.

\bibitem{HoffmanBennett1985}
D.D. Hoffman and B.M. Bennett.
\newblock Inferring the relative three-dimensional positions of two moving
  points.
\newblock {\em J. Opt. Soc. Am . A}, Vol 2/2, 1985.

\bibitem{HuAhuja1993}
X.~Hu and N.~Ahuja.
\newblock Mirror uncertainty and uniqueness conditions for determining shape
  and motion from orthographic projection.
\newblock {\em International Journal of Computer Vision}, 13:295--309, 1994.

\bibitem{HuAhuja1995}
X.~Hu and N.~Ahuja.
\newblock Necessary and sufficient conditions for a unique solution of plane
  motion and structure.
\newblock {\em IEEE Transactions on Robotics and Automation}, 11, 1995.

\bibitem{KnillRamirezOmni}
O.~Knill and J.~Ramirez.
\newblock Space and camera path reconstruction for omnidirectional vision.
\newblock 2007.

\bibitem{KnillRamirezInequality}
O.~Knill and J.~Ramirez.
\newblock A structure from motion inequality.
\newblock 2007.

\bibitem{Koenderink}
J.~Koenderink and A.~van Doorn.
\newblock Affine structure from motion.
\newblock {\em J. Opt. Soc. Am. A}, 8(2):377--385, 1991.

\bibitem{pritt}
M.D. Pritt.
\newblock Structure and motion from two orthographic views.
\newblock {\em J. Opt. Soc. Am. A}, 13:916--921, 1996.

\bibitem{Ullman}
S.~Ullman.
\newblock {\em The interpretation of visual motion}.
\newblock MIT Press, 1979.

\end{thebibliography}
\end{document}